\newtheorem{definition}{Definition}
\newtheorem{theorem}{Theorem}
\newtheorem{lemma}{Lemma}
\newtheorem{proposition}[theorem]{Proposition}
\newtheorem{corollary}{Corollary}
\newtheorem{example}{Example}
\newtheorem{assumption}{Assumption}
\newtheorem{observation}{Observation}
\newenvironment{proof}[1][Proof:]{\noindent\textbf{#1} }{\ \rule{0.5em}{0.5em}}
\newcommand{\D}{\mathcal{D}}
    \newenvironment{ex}{\begin{example} \begin{normalfont}}{\end{normalfont}
    \end{example}}
    \newcommand{\convhull}{\mathrm{convhull}}
    \newcommand{\abs}[1]{\left|{#1}\right|}
    \newcommand{\PR}[2]{\Pr_{#2}\left[#1\right]}
 \newcommand{\pr}[1]{\Pr\left[#1\right]}
        \newcommand{\E}[1]{\mathrm{E}\left[#1\right]}
    \newcommand{\R}{{\mathbb R}}
    \newcommand{\supp}{\mathrm{supp}}
\title{Fairness under Competition}
\author{  Ronen Gradwohl\\
Department of Economics\\
University of Haifa\\
\texttt{rgradwohl@econ.haifa.ac.il}
 \And Eilam Shapira\\
 Faculty of Data and Decision Sciences\\
Technion -- Israel Institute of Technology\\
\texttt{eilam.shapira@gmail.com} 
\And Moshe Tenneholtz\\
Faculty of Data and Decision Sciences\\
Technion -- Israel Institute of Technology\\ 
\texttt{moshet@ie.technion.ac.il} 
}
\begin{document}

	\maketitle

\begin{abstract} 
Algorithmic fairness has emerged as a central issue in ML, and it has become standard practice to adjust ML algorithms so that they will satisfy fairness requirements such as Equal Opportunity. In this paper we consider the effects of adopting such fair classifiers on the overall level of {\em ecosystem fairness}. Specifically, we introduce the study of fairness with competing firms, and demonstrate the failure of fair classifiers in yielding fair ecosystems. Our results quantify the loss of fairness in systems, under a variety of conditions, based on classifiers' correlation and the level of their data overlap. We show that even if competing classifiers are individually fair, the ecosystem's outcome may be unfair; and that adjusting biased algorithms to improve their individual fairness may lead to an overall decline in ecosystem fairness. In addition to these theoretical results, we also provide supporting experimental evidence. Together, our model and results provide a novel and essential call for action.
\end{abstract}

\section{Introduction}
Algorithmic decision-making systems are not immune to human prejudices.  This has been demonstrated by  ample empirical evidence: For example, the use of algorithmic decision-making in determining loan approval and interest rates 
has led to minority applicants facing higher loan rejection rates and higher interest rates than non-minority applicants \citep{bartlett2022consumer,fuster2022predictably}, and credit card issuers offering lower credit limits for women than men \citep{gupta2019algorithms,telford2019apple}.  Additional contexts in which such algorithmic bias has been documented are as diverse as job recruitment \citep{doleac2016does}, insurance premiums and payouts \citep{angwin2017minority}, college admissions \citep{baker2022algorithmic,gandara2024inside}, and more \citep{o2016weapons}.

To contextualize, consider a bank or other financial institution that issues loans and is subject to regulatory oversight. The bank employs historical data to train a classifier---a decision rule---to determine loan eligibility, with the aim of maximizing profit through successful loan repayments. However, such classifiers frequently exhibit biases against protected groups, such as ethnic minorities. In response, regulators can mandate fairness restrictions to mitigate these disparities. Bias is not only normatively problematic, but also has tangible adverse effects on the utility of affected individuals and, more broadly, on the welfare of the disadvantaged group.

The regulator's goal in imposing fairness adjustments to classifiers is thus to enhance the welfare of historically disadvantaged populations. One principled approach to fairness adjustment involves specifying a utility function, and requiring the classifier to ensure equal welfare across groups. Prominent fairness criteria such as Demographic Parity (DP) \citep{agarwal2018reductions,dwork2012fairness}, as well as Equal Opportunity (EO) and Equalized Odds (ED) \citep{hardt2016equality} can be interpreted as special cases within this welfare-based framework. For instance, DP corresponds to a utility specification where each approved applicant gains a utility of one and zero otherwise. EO and ED, correcting for some criticisms about DP, does the same but only for the restricted set of deserving applicants.

Fairness interventions are intended to promote equity. While even widely adopted constraints like DP and EO may lead to suboptimal long-term outcomes \citep{liu2018delayed}, common wisdom holds that proper fairness adjustments of ML algorithms is an essential requirement.

But is the adoption of fair algorithms by firms sufficient to guarantee a {\em fair AI ecosystem}, in which multiple firms interact?  Recent work in AI has recognized that AI systems should be evaluated in the context of multi-agent system, in which several stakeholders are active \citep{kurland2022competitive,boutilier2024recommender}. In lending, multiple financial institutions underwrite loans and issue credit cards; in employment, many firms vie for the same pool of job applicants; in insurance, numerous insurance companies offer policies; and in education, a large number of colleges compete over the same cohorts of students. In such multi-firm environments, equity is not about whether a single firm's decisions are fair, but whether there is {\em overall discrimination} in the number and quality of offers issued and opportunities afforded to loan, job, insurance, and college applicants.

 In the world of algorithmic fairness, this translates into a novel question. Suppose there are several competing firms, each of which adopts a  fair classifier. Does the adoption of fair classifiers induce a fair ecosystem, in which there is no  overall discrimination? 

In this paper, we introduce the study of {\em fairness with competing firms}. We provide fundamental definitions, and   demonstrate the failure of fair classifiers in obtaining fair ecosystems. In particular, we show that even if competing classifiers are individually fair, the ecosystem's outcome may be unfair; and that adjusting biased algorithms to improve their individual fairness may lead to an overall {\em decline} in ecosystem fairness. 

\paragraph{Contributions} We begin in Section~\ref{sec:model} by developing a model of ecosystem fairness. We suppose there are multiple lenders who make loan offers to borrowers; however, the model applies analogously to job candidates and employers, insurance buyers and insurers, or student applicants and colleges. For this model we define our main notion of fairness under competition, a competition analogue of EO that we call Equal Opportunity under Competition (EOC). The level of EOC here is a measure of how far the ecosystem is from satisfying EOC, where a lower level implies higher ecosystem fairness. We also define a second, welfare-based version of EOC. An ecosystem satisfies this second notion if the welfare of different groups of borrowers is equalized when the lenders use their classifiers to make loan offers.

In Section~\ref{sec:EOC} we turn to study EOC. Our main results are that, even when the individual classifiers satisfy EO, the ecosystem may be far from EOC. We quantify how far, based on model primitives, and provide worst-case bounds. In particular, we identify two distinct forces under which EO classifiers do not satisfy EOC. The first force, analyzed in Section~\ref{sec:correlation}, arises from differences in the correlations between the classifiers on the protected groups. The second force, analyzed in Section~\ref{sec:disjoint}, arises when the lenders' pools of potential borrowers are overlapping but not identical. We quantify the extent to which each force decreases ecosystem fairness.

To illustrate the first force, consider the following example:
\begin{ex}\label{ex:third-party}
There are two lenders and two groups of borrowers. Each lender has vast, distinct data on group 1, and trains a classifier to predict loan repayment. Neither lender has sufficient data on group 2, so both outsource to a third-party, who provides each lender with an (identical) classifier. Each lender uses her own and the third-party classifiers to offer loans to individuals predicted to repay the loan.
\end{ex}
In this example, if the lenders' and third-party's classifiers have the same accuracies, then each lender's decisions satisfy EO. Note, however, that the lenders' predictions on group 2 are identical, as they are using the same classifier, and so the Pearson correlation between them is high. On the other hand, since the lenders use different classifiers with different training data on group 1, their respective predictions on this group have lower correlation. In Proposition~\ref{prop:worst-case-EO} and Corollary~\ref{cor:worst-case-EO-symmetric} we show that this difference in correlations leads to a positive level of EOC, of the same order-of-magnitude as the classifiers' false-negative rates. Hence, competition between EO classifiers leads to a non-EOC outcome.

To illustrate the second force, consider the following example:
\begin{ex}\label{ex:disjoint-party}
There are two lenders and two groups of borrowers. Each lender serves all applicants of group 1 but only a subset of applicants of group 2, perhaps because the lenders have lower market penetration in the latter population. For served applicants, lenders use an EO classifier to predict loan repayment and offer loans to individuals predicted to repay.
\end{ex}
In this example, the overlap between applicants served by the two classifiers in group 1 is higher than in group 2. In Proposition~\ref{prop:worst-case-EO-disjoint-random} and Corollary~\ref{cor:disjoint} we show that such a difference in overlaps leads to a positive level of EOC, again of the same order-of-magnitude as the classifiers' false-negative rates, and again despite the fact that individual classifiers {\em do} satisfy EO.

Section~\ref{sec:EOC} focuses on EOC when there are two lenders. However, in Sections~\ref{apx:utility} and~\ref{apx:many-classifiers} we extend some results to the welfare-based notion of fairness and to an arbitrary number of lenders.

In Section~\ref{sec:harmful-fairness} we compare competition between classifiers that are not EO with competition between the same classifiers after undergoing a fairness adjustment (making them EO). We describe two examples in which the ecosystem is EOC {\em before} the fairness adjustment, but {\em not} EOC afterwards. These examples show that imposing fairness adjustment on individual classifiers can lead to a decline in ecosystem fairness.

Next, in order to explore the empirical prevalence of our theoretical results, in Section~\ref{sec:experiments} we describe  several experiments we ran on Lending Club loan data. In these experiments we compared the level of EOC before and after fairness adjustments under several variations. Our results indicate that, surprisingly, imposing fairness adjustments {\em often} leads to a higher level of EOC, and so lower ecosystem fairness. 

Finally, in Appendix~\ref{apx:ED-DP} we extend our results to other notions of fairness, namely, ED and DP.

\paragraph{Related literature}
Our paper is part of a large and growing literature on fairness in machine learning \citep[see, e.g.,][among others]{chouldechova2018frontiers,mehrabi2021survey,pessach2022review,barocas2023fairness}. The literature includes a plethora of demonstrations of bias in machine learning, a large number of fairness metrics, and many algorithms for bias mitigation \citep[the last of which is surveyed extensively by][]{hort2024bias}.

Within this large literature, our paper fits within a subset of papers that question whether fair machine learning algorithms actually achieve fairness \citep[see][for a survey]{ruggieri2023can}. One paper related to ours in this vein is that of \citet{liu2018delayed}, who show that imposing fairness criteria may have adverse long-term effects. This can be seen as illustrating the failure of imposed fairness criteria due to temporal or sequential effects. Our paper, in contrast, can be seen as illustrating the failure due to competition or parallel effects. 

Our research is more closely related to papers that examine fairness in settings with multiple classifiers. Two such papers, \citet{bower2017fair} and \citet{dwork2019fairness}, also discuss the insight that fairness of individual classifiers does not imply fairness of a system composed of multiple classifiers. Our paper differs along multiple dimensions. First, in terms of motivation, these papers ask when a central platform (e.g., an ad platform) will be fair when handling a task that consists of several subtasks (advertisers on the platform), where each subtask is required to be fair. Our motivation is more about competing firms in a decentralized market, where the joint activity determines the user's utility. More specifically,  \citet{dwork2019fairness} largely focus on individual fairness. They do have some extensions to group fairness, and their main results here are to show that there exist distributions for which individual fairness does not imply joint fairness. However, in their model the classifiers are assumed to always be independent, and they cannot capture the correlations between classifiers that lead to ecosystem unfairness. In addition, their notion of group fairness does not include EO.
\citet{bower2017fair} do focus on EO, but here the main difference is that classifiers are composed sequentially: one classifier makes a prediction or decision, the outcome is then passed on to the next classifier, and so on. In our setting, in contrast, the classifiers run in parallel. Finally, although both papers contain the insight that individual fairness does not suffice for joint fairness, in our paper we also identify the forces that lead to the joint unfairness, and quantify the extent of this unfairness as a function of correlations and overlaps.

Another somewhat related paper is that of  \citet{ustun2019fairness}, who compare the benefits to protected groups when there is one classifier with imposed fairness criteria, as opposed to many group-tailored individual classifiers. Although the latter case involves multiple classifiers, individuals are not classified by more than one, and so there is no competition between the classifiers.

Our paper is also part of a smaller literature on competition between machine learning algorithms \citep[see,e.g.,][]{ben2017best,feng2022bias,jagadeesan2023improved}. While that literature demonstrates that competition has some counter-intuitive effects, it has not  examined issues surrounding fairness.

Finally, Example~\ref{ex:third-party} above is related to the insight of \citet{kleinberg2021algorithmic}, who show that algorithmic monoculture---the use of the same algorithms by different firms---can lead to a decrease in utilities. In Example~\ref{ex:third-party} the use of the third-party classifier by the two firms leads to a difference in correlations, which can then lead to ecosystem bias.

\section{Model}\label{sec:model}
There are two types of players, borrowers and lenders.
Each borrower is characterized by a triplet $(x,a,y)\in X\times A\times Y$, where $x$ is a vector of observable features, $a$ is an observable protected attribute (group membership), and $y$ is an unobservable outcome variable. For simplicity, assume $A=Y=\{0,1\}$. There is an underlying distribution $\D$ over $X\times A\times Y$. We slightly abuse notation by using $X$, $A$, and $Y$ as the sets of features, protected attributes, and outcome variables, and also as random variables with joint distribution $\D$ over the respective sets. Throughout, we think of each $(x,a,y)\in X\times A\times Y$ as an individual borrower. 

There is a set $L$ of lenders. Each lender $\ell\in L$ decides whether or not to offer a loan to each borrower based on observables $(x,a)$. To do so the lender uses a potentially randomized classifier $c_\ell:X\times A\mapsto \{0,1\}$, where $c_\ell(x,a)=1$ if the lender offers borrower $(x,a)$ a loan, and $c_\ell(x,a)=0$ otherwise. The unobservable outcome variable $Y$ signifies whether or not the borrower repays the loan, and we call borrowers with $y=1$ {\em deserving} borrowers. Naturally, lenders prefer to offer loans only to deserving borrowers. Hence, perfect classifiers are ones where $c_\ell(x,a)=y$ for each borrower $(x,a,y)$.  In general, however, firms' classifiers are not perfect. For a given classifier $c_\ell$, denote its false-negative rate by $\beta_\ell=\pr{c_\ell(X,A)=0 | Y=1}$ and its false-positive rate by $\alpha_\ell=\pr{c_\ell(X,A)=1|Y=0}$. Conversely, $1-\beta_\ell$ and $1-\alpha_\ell$ are the classifier's true-negative and true-positive rates, respectively.

Borrowers' utilities are described by a function $v:X\times A\times Y \times \mathbb{N}\mapsto \R$, where $\mathbb{N}$ is the set of non-negative integers. $v(x,a,y,r)$ is the utility of a borrower $(x,a,y)$ when she receives $r$ offers.\footnote{More generally, borrowers' utilities could also depend on {\em which} lenders extended offers. Our definitions and results apply to this more general setting as well.}

\paragraph{Fairness with a single lender} We begin by stating two notions of fairness for the setting of a single classifier.\footnote{These notions are the foci of the paper, but in Appendix~\ref{apx:ED-DP} we extend our results to other notions.} The first is the popular notion of EO \citep{hardt2016equality}, which requires the classifier's false-negative rates to be equal across groups.  The second is based on the welfare-equalizing notion of \citet{ben2021protecting} that we call $v$-EO, and which requires the expected utilities of deserving borrowers to be equal across groups.
\begin{itemize}
\item[EO:] The EO level of $c_\ell$ is $\abs{\E{c_\ell(X,A)|Y=1,A=0} - \E{c_\ell(X,A)|Y=1,A=1}}$, the difference in the fraction of offers made to each of the groups' deserving borrowers. A classifier is EO if its EO level is 0.
\item [$v$-EO:] For a given utility function $v$ of the borrowers and classifier $c_\ell$ of a lender, denote by $W_{v,c_\ell}(a) = \E{v(X,A,Y,c_\ell(X,A))|Y=1,A=a}$ the welfare of deserving borrowers from group $a$. The $v$-EO level of a classifier $c_\ell$ is $\abs{W_{v,c_\ell}(0)-W_{v,c_\ell}(1)}$, the difference in welfare between the deserving borrowers of the two groups under classifier $c_\ell$.
\end{itemize}

\paragraph{Fairness with multiple lenders} We now generalize the definitions of fairness to a setting with multiple competing lenders,  EO under competition (EOC) and  $v$-EO under competition ($v$-EOC) . 
Fix a utility function $v$ and classifiers $c=(c_1,\ldots,c_{\abs{L}})$, and denote by $R(x,a)=\sum_{\ell\in L} c_\ell(x,a)$ the number offers made to a borrower with observables $(x,a)$. Also, denote by $d(x,a)=\pr{R(x,a)\geq 1}$, the probability that at least one lender offers a borrower with observables $(x,a)$ a loan.
\begin{itemize}
\item[EOC:] The EOC level of classifiers $c$ is $\abs{\E{d(X,A)|Y=1,A=0} - \E{d(X,A)|Y=1,A=1}}$, the difference between the two groups in the fraction of deserving borrowers who obtained at least one offer. Classifiers are EOC if their EOC level is 0.
\item[$v$-EOC:] For a given utility function $v$ of the borrowers and classifiers $c$ of the lenders, denote by 
$W_{v,c}(a) = E\left[v(X,A,Y,R(X,A)) |Y=1,A=a\right]$. 
The $v$-EOC level of classifiers $c$ is then $\abs{W_{v,c}(0)-W_{v,c}(1)}$.
\end{itemize}

In some of our analyses we will impose several assumptions. First, we sometimes focus on the case of two lenders.
Second, we sometimes focus on the case in which borrowers do not care how many offers they obtain, as long as they obtain at least one. In this case, their preferences satisfy the following:
\begin{assumption}[0-1 preferences]\label{0-1}
$v(x,a,y,r)=1$ if $r\geq 1$ and $v(x,a,y,0)=0$, $\forall x,a,y$.
\end{assumption}
Observe that under Assumption~\ref{0-1}, EO is equivalent to $v$-EO and EOC is equivalent to $v$-EOC. For more general utility functions EO and EOC may differ from $v$-EO and $v$-EOC.

\section{Equal Opportunity under Competition}\label{sec:EOC}
In this section we ask the following question: Given two EO classifiers, are they guaranteed to be EOC? We show that the answer is negative, and identify two distinct forces that drive this result. We also quantify the level of EOC, and bound the worst case. The first force, analyzed in Section~\ref{sec:correlation}, arises from differences in the correlation between the classifiers on the two protected groups. The second force, analyzed in Section~\ref{sec:disjoint}, arises when each classifier serves only a subset of the borrowers.

\subsection{Correlations between classifiers}\label{sec:correlation}
Fix two EO classifiers $c_1$ and $c_2$ with false-negative rates $\beta_1$ and $\beta_2$. We define two Bernoulli random variables that capture the true positives of the classifiers on the two groups $a\in A$: For each $a\in A$ and each $\ell\in \{1,2\}$, let the Bernoulli random variable
\[B_\ell^a \equiv c_\ell\left(X,A\right)|_{\left(Y=1, A=a\right)}.\]
Each $B_\ell^a$ is the output of classifier $c_\ell$ on random instances with $A=a$ and $Y=1$, and so $\E{B_\ell^a}=1-\beta_\ell$ is the true-positive rate of classifier $c_\ell$. Denote by $\sigma_\ell=\sqrt{\beta_\ell(1-\beta_\ell)}$ the standard deviation of $B_\ell^0$, and note that, because $c_\ell$ is EO, $\sigma_\ell$ is also the standard deviation of $B_\ell^1$. Finally, for each $a$ let
$\rho^a$ denote the Pearson correlation coefficient between $B_1^a$ and $B_2^a$. The Pearson correlation captures the extent to which the true positives of the classifiers correlate with one another.

We now quantify the EOC level of two classifiers using these correlation coefficients, and determine the worst case.
\begin{proposition}\label{prop:worst-case-EO}
For two EO classifiers $c_1$ and $c_2$ with false-negative rates $\beta_1$ and $\beta_2$, the level of EOC is $\sigma_1\cdot\sigma_2\cdot\abs{\rho^0-\rho^1}$.
In the worst case, the level of EOC is $\min\{\beta_1, \beta_2\}-\max\{0, \beta_1+\beta_2-1\}$. 
\end{proposition}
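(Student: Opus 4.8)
The plan is to reduce the EOC level to a statement about the joint success probabilities of the classifiers' true-positive indicators, and then invoke the Fréchet--Hoeffding inequalities.

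First I would specialize to $L=\{1,2\}$, so that $R(x,a)=c_1(x,a)+c_2(x,a)$ and $d(x,a)=\pr{c_1(x,a)=1 \text{ or } c_2(x,a)=1}$, the probability being over the classifiers' internal randomization. Conditioning on $\{Y=1,A=a\}$, taking expectations over $X$, and using the law of total probability together with inclusion--exclusion gives
\[
\E{d(X,A)\mid Y=1,A=a} \;=\; \E{B_1^a} + \E{B_2^a} - \E{B_1^a B_2^a}.
\]
Since $c_1$ and $c_2$ are EO, $\E{B_\ell^a}=1-\beta_\ell$ does not depend on $a$, so these two terms cancel when we take the difference across groups, and the EOC level equals $\abs{\E{B_1^1 B_2^1}-\E{B_1^0 B_2^0}}$. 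Writing $\E{B_1^a B_2^a}=\cov{B_1^a}{B_2^a}+\E{B_1^a}\E{B_2^a}=\rho^a\sigma_1\sigma_2+(1-\beta_1)(1-\beta_2)$ (using that, by EO, $B_1^a$ and $B_2^a$ have standard deviations $\sigma_1$ and $\sigma_2$ for both $a$) reduces the EOC level to $\sigma_1\sigma_2\abs{\rho^0-\rho^1}$, which is the first assertion.

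For the worst case, observe that $\E{B_1^a B_2^a}=\pr{B_1^a=1,\,B_2^a=1}$ is the joint success probability of two Bernoulli variables with means $1-\beta_1$ and $1-\beta_2$. By the Fréchet--Hoeffding bounds this quantity ranges over the entire interval $[\max\{0,\,1-\beta_1-\beta_2\},\ \min\{1-\beta_1,\,1-\beta_2\}]$, with the endpoints attained by the countermonotone and comonotone couplings. Because the conditional laws given $A=0$ and given $A=1$ can be chosen independently, the difference $\E{B_1^1 B_2^1}-\E{B_1^0 B_2^0}$ is maximized by placing the $a=1$ term at its upper bound and the $a=0$ term at its lower bound, yielding $(1-\max\{\beta_1,\beta_2\})-\max\{0,1-\beta_1-\beta_2\}$, which I would then simplify by splitting on whether $\beta_1+\beta_2\le 1$ to obtain $\min\{\beta_1,\beta_2\}-\max\{0,\beta_1+\beta_2-1\}$. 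To see this bound is tight I would exhibit an explicit instance: take $A$ and $Y$ uniform and independent, $X=\{0,1\}^2$ with $c_\ell(x,a)=x_\ell$, and choose $\D$ so that, conditional on $\{Y=1,A=1\}$, $X=(x_1,x_2)$ follows the comonotone coupling of Bernoulli$(1-\beta_1)$ and Bernoulli$(1-\beta_2)$, while conditional on $\{Y=1,A=0\}$ it follows the countermonotone coupling (conditional on $Y=0$, $X$ is arbitrary). Each $c_\ell$ is then EO with false-negative rate $\beta_\ell$, and the computation above gives EOC level exactly $\min\{\beta_1,\beta_2\}-\max\{0,\beta_1+\beta_2-1\}$.

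I expect the main obstacle to be the first reduction rather than the optimization: one must handle randomized classifiers carefully, keeping straight the order of the expectation over the classifiers' randomness and the expectation over $X$, and must verify that the EO constraint pins down only the marginals of each $B_\ell^a$, leaving $\rho^0$ and $\rho^1$ (and the two conditional couplings) free to be specified separately. Once that is in place, the Fréchet-bound step and the final case analysis are routine.
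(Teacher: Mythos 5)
Your proposal is correct and follows essentially the same route as the paper's proof: both express the group-$a$ coverage probability through the joint behavior of $B_1^a$ and $B_2^a$ (you via $\E{B_1^aB_2^a}$ and inclusion--exclusion, the paper via $\pr{B_1^a=B_2^a=0}$, which are equivalent given the fixed marginals), rewrite it using $\rho^a\sigma_1\sigma_2$, and then obtain the worst case from the Fr\'echet--Hoeffding bounds with achievability via comonotone/countermonotone couplings, which is exactly the paper's nested-versus-disjoint misclassification-set argument. Your explicit $X=\{0,1\}^2$ construction is a slightly more concrete tightness witness than the paper's verbal one, but it is not a different method.
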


The intuition is the following. Under high (resp., low) correlation, if a borrower is misclassified by one classifier, she is likely also (resp., not) misclassified by the other. In the low correlation case, each deserving borrower intuitively has ``two chances'' to get an offer, whereas in the high correlation case she only has ``one chance''. If in one group deserving borrowers get two chances and in the other they only get one, then in the former there is a lower chance of being misclassified. 

The proofs of this and all other propositions are deferred to Appendix~\ref{apx:proofs}. Proposition~\ref{prop:worst-case-EO} yields the following simple corollary.
\begin{corollary}\label{cor:worst-case-EO-symmetric}
If $\beta=\beta_1=\beta_2$ and $\beta\leq 1/2$, then the worst-case level of EOC is $\beta$.
\end{corollary}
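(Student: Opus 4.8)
The plan is to derive this directly from Proposition~\ref{prop:worst-case-EO}, which already establishes that the worst-case EOC level for two EO classifiers with false-negative rates $\beta_1,\beta_2$ equals $\min\{\beta_1,\beta_2\}-\max\{0,\beta_1+\beta_2-1\}$. The corollary is simply the specialization of this formula to the symmetric case.

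First I would substitute $\beta_1=\beta_2=\beta$ into the worst-case expression. The first term becomes $\min\{\beta,\beta\}=\beta$, and the second term becomes $\max\{0,2\beta-1\}$, so the worst-case EOC level is $\beta-\max\{0,2\beta-1\}$. Next I would invoke the hypothesis $\beta\leq 1/2$, which gives $2\beta-1\leq 0$ and hence $\max\{0,2\beta-1\}=0$. Combining these, the worst-case EOC level equals $\beta-0=\beta$, as claimed.

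There is essentially no obstacle here: the only point requiring any care is checking that the sign condition $2\beta-1\leq 0$ is exactly what the assumption $\beta\leq 1/2$ provides, so that the $\max$ term vanishes. One could also remark, for completeness, that when $\beta>1/2$ the worst case would instead be $\beta-(2\beta-1)=1-\beta$, which is why the regime $\beta\leq 1/2$ is singled out in the statement; but this is not needed for the proof of the corollary as stated.
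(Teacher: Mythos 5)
Your proposal is correct and matches the paper's (implicit) argument exactly: the corollary is obtained by substituting $\beta_1=\beta_2=\beta$ into the worst-case expression of Proposition~\ref{prop:worst-case-EO} and noting that $\beta\leq 1/2$ makes the term $\max\{0,2\beta-1\}$ vanish. No gaps.
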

An implication of Proposition~\ref{prop:worst-case-EO} and Corollary~\ref{cor:worst-case-EO-symmetric} is that the more accurate the classifiers get, the lower the worst-case level of EOC. Thus, one way to minimize the EOC is to train more accurate classifiers, and so this desideratum is aligned with the general goal of machine learning.

We can also use Proposition~\ref{prop:worst-case-EO} to analyze Example~\ref{ex:third-party}. 
\paragraph{Example~\ref{ex:third-party} continued} Suppose all classifiers in the example---the third-party classifier, as well as the two firms' individual classifiers---have the same false-negative rate $\beta$. Thus, a firm that uses its own classifier on $A=1$ and the third-party classifier on $A=0$ is in practice using an EO classifier. Suppose also that the firms' individual classifiers $c_1$ and $c_2$ are uncorrelated on $A=1$, namely, that $\rho^1=0$. Finally, note that, since both firms use the same classifier on $A=0$ we also have $\rho^0=1$. Thus, even though each firm uses an EO classifier, the level of EOC is $\sigma_1\sigma_2\abs{0-1} = \beta(1-\beta)$.

\subsubsection{Generalization: $v$-EOC}\label{apx:utility}
Recall that, under Assumption~\ref{0-1}, EOC is equivalent to $v$-EOC, in which case all our results apply to the latter notion as well. In this section we study $v$-EOC when utility functions do not satisfy Assumption~\ref{0-1}. In particular, we  drop the assumption that $v(x,a,y,1)=v(x,a,y,2)=1$, and instead assume the following:
\begin{assumption}[0-1-k preferences]\label{0-1-k}
$v(x,a,y,r)=k\geq 1$ for $r>1$, $v(x,a,y,1)=1$, and $v(x,a,y,0)=0$, $\forall x,a,y$.
\end{assumption}

Assumption~\ref{0-1} imposes $k=1$, but now we consider all values of $k\geq 1$. Observe that under Assumption~\ref{0-1-k}, a classifier is still EO if and only if it is $v$-EO. However, for a pair of classifiers $c_1$ and $c_2$, the definitions of EOC and $v$-EOC are no longer equivalent.

We first generalize Proposition~\ref{prop:worst-case-EO} to this setting, and then apply it to Example~\ref{ex:third-party}.

\begin{proposition}\label{prop:worst-case-EO-util}
Under Assumption~\ref{0-1-k}, for two EO classifiers $c_1$ and $c_2$ with false-negative rates $\beta_1$ and $\beta_2$, the level of $v$-EOC is $\sigma_1\sigma_2 \abs{(k-2)(\rho^0-\rho^1)}$.
In the worst case, the level of $v$-EOC is $\abs{k-2}\cdot \left(\min\{\beta_1, \beta_2\}-\max\{0, \beta_1+\beta_2-1\}\right)$. 
\end{proposition}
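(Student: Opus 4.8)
The plan is to retrace the proof of Proposition~\ref{prop:worst-case-EO}, tracking how the parameter $k$ enters. Fix a group $a$ and let $R^a = B_1^a + B_2^a \in \{0,1,2\}$ be the number of offers received by a random deserving borrower of group $a$. Under Assumption~\ref{0-1-k}, the welfare of such a borrower is $W_{v,c}(a) = \pr{R^a = 1} + k\,\pr{R^a = 2}$. The first step is to rewrite this in terms of the marginals and the joint success probability $p^a := \pr{B_1^a = 1,\, B_2^a = 1}$. Using $\E{R^a} = (1-\beta_1)+(1-\beta_2) = \pr{R^a = 1} + 2\,\pr{R^a = 2}$ together with $\pr{R^a = 2} = p^a$, one gets $\pr{R^a = 1} = (1-\beta_1)+(1-\beta_2) - 2p^a$, and hence
\[
W_{v,c}(a) = (1-\beta_1) + (1-\beta_2) + (k-2)\,p^a .
\]

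Next I would substitute, via the definition of the Pearson correlation, $p^a = \E{B_1^a B_2^a} = (1-\beta_1)(1-\beta_2) + \rho^a\sigma_1\sigma_2$. This is where the EO hypothesis is used: it makes $\E{B_\ell^a} = 1-\beta_\ell$ and $\sigma_\ell = \sqrt{\beta_\ell(1-\beta_\ell)}$ independent of $a$, so the only $a$-dependence left is through $\rho^a$. Differencing the two groups, every $a$-independent term cancels and we are left with $W_{v,c}(0) - W_{v,c}(1) = (k-2)\,\sigma_1\sigma_2\,(\rho^0 - \rho^1)$; taking absolute values yields the stated $v$-EOC level $\sigma_1\sigma_2\,\abs{(k-2)(\rho^0-\rho^1)}$.

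For the worst case, note that $k$ is a fixed constant, so maximizing the $v$-EOC level amounts to maximizing $\sigma_1\sigma_2\,\abs{\rho^0-\rho^1}$ over all pairs of EO classifiers with false-negative rates $\beta_1,\beta_2$ --- precisely the optimization already carried out in Proposition~\ref{prop:worst-case-EO}. Concretely, for each $a$ the Fr\'echet--Hoeffding bounds force $p^a \in [\max\{0,\, 1-\beta_1-\beta_2\},\ \min\{1-\beta_1,\,1-\beta_2\}]$, every value in this interval is attainable by some joint law with the prescribed marginals, and the interval is the same for $a=0$ and $a=1$; hence $p^0 - p^1 = \sigma_1\sigma_2(\rho^0-\rho^1)$ can be made as large as $\min\{1-\beta_1,1-\beta_2\} - \max\{0,1-\beta_1-\beta_2\} = \min\{\beta_1,\beta_2\} - \max\{0,\beta_1+\beta_2-1\}$. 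Multiplying by $\abs{k-2}$ gives the claimed worst-case $v$-EOC level.

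The computation is essentially routine; the one thing to be careful about is the role of the EO assumption. It is what guarantees both that the cross term in $W_{v,c}(0)-W_{v,c}(1)$ is the only surviving $a$-dependent term and that the Fr\'echet feasibility interval for $p^a$ is identical for the two groups, which is what allows $\rho^0$ and $\rho^1$ to be pushed independently to opposite endpoints in the worst-case construction. If anything counts as the ``main obstacle,'' it is simply recognizing that the worst-case optimization is unchanged from Proposition~\ref{prop:worst-case-EO} and only picks up the constant factor $\abs{k-2}$.
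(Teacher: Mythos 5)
Your proposal is correct and follows essentially the same route as the paper's proof: express the group-$a$ welfare through the joint law of $(B_1^a,B_2^a)$, rewrite the joint success probability via the Pearson correlation so that only the $(k-2)\rho^a\sigma_1\sigma_2$ term survives the difference across groups, and then reuse the Fr\'echet--Hoeffding argument from Proposition~\ref{prop:worst-case-EO} for the worst case. Your use of $\E{R^a}$ to avoid computing each probability cell separately is a minor streamlining of the paper's explicit cell-by-cell computation, not a different argument.
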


There are a few interesting observations to note. First, in all cases except $k=2$, the level of $v$-EOC is positive, even though both classifiers are EO (and so, under Assumption~\ref{0-1-k}, also $v$-EO). Hence, our insight is, in a sense, robust. Interestingly, however, there is a qualitative difference between the case $k\in [1,2)$ and $k>2$, in that the group with lower utility under competition is different in the two parameter intervals. This last point is further explained at the end of this sub-section, in the context of the example.

Proposition~\ref{prop:worst-case-EO-util} yields the following simple corollary.
\begin{corollary}\label{cor:worst-case-EO-symmetric-util}
If $\beta=\beta_1=\beta_2$ and $\beta\leq 1/2$, then the worst-case level of $v$-EOC is $\beta\cdot\abs{k-2}$.
\end{corollary}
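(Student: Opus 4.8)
The plan is to obtain this corollary as an immediate specialization of Proposition~\ref{prop:worst-case-EO-util}, in exactly the way Corollary~\ref{cor:worst-case-EO-symmetric} is obtained from Proposition~\ref{prop:worst-case-EO}. By Proposition~\ref{prop:worst-case-EO-util}, the worst-case level of $v$-EOC for two EO classifiers with false-negative rates $\beta_1$ and $\beta_2$ equals $\abs{k-2}\cdot\left(\min\{\beta_1,\beta_2\}-\max\{0,\beta_1+\beta_2-1\}\right)$. First I would substitute $\beta_1=\beta_2=\beta$, so that $\min\{\beta_1,\beta_2\}=\beta$. Next I would invoke the hypothesis $\beta\le 1/2$, which gives $\beta_1+\beta_2-1=2\beta-1\le 0$ and hence $\max\{0,\beta_1+\beta_2-1\}=0$. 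Plugging these in, the worst-case expression collapses to $\abs{k-2}\cdot(\beta-0)=\beta\cdot\abs{k-2}$, which is the claimed value.

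Since this is a direct substitution, there is no genuine obstacle; the only point requiring care is the direction of the hypothesis, i.e.\ that it is $\beta\le 1/2$ (and not $\beta\ge 1/2$) that forces the $\max$ term to vanish. For completeness one could also record why the bound is tight: with $\sigma_1=\sigma_2=\sqrt{\beta(1-\beta)}$, the per-instance level $\sigma_1\sigma_2\abs{(k-2)(\rho^0-\rho^1)}$ attains $\beta\abs{k-2}$ precisely when $\abs{\rho^0-\rho^1}$ is as large as the Bernoulli constraints allow, namely $\frac{1}{1-\beta}$ (achieved by taking $\rho^0=1$ and $\rho^1=-\beta/(1-\beta)$ for Bernoulli pairs with mean $1-\beta$). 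But this tightness is already asserted as part of Proposition~\ref{prop:worst-case-EO-util}, so merely citing that proposition is enough to conclude.
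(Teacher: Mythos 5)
Your proof is correct and matches the paper's intended argument: the corollary is exactly the specialization of Proposition~\ref{prop:worst-case-EO-util} with $\beta_1=\beta_2=\beta$, where $\beta\le 1/2$ makes $\max\{0,2\beta-1\}=0$. Your added tightness remark (with $\rho^0=1$, $\rho^1=-\beta/(1-\beta)$, so $\abs{\rho^0-\rho^1}=1/(1-\beta)$) is consistent with the Fr\'echet--Hoeffding analysis in the paper's proof of the proposition, and as you note it is already subsumed by citing that proposition.
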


We can also use Proposition~\ref{prop:worst-case-EO-util} to further analyze Example~\ref{ex:third-party}. 
\paragraph{Example~\ref{ex:third-party} continued} Suppose again that all classifiers in the example---the third-party classifier, as well as the two firms' individual classifiers---have the same false-negative rate $\beta$. Thus, a firm that uses its own classifier on $A=1$ and the third-party classifier on $A=0$ is in practice using an EO classifier. Suppose also that the firms' individual classifiers $c_1$ and $c_2$ are uncorrelated on $A=1$, namely, that $\rho^1=0$. Finally, note that, since both firms use the same classifier on $A=0$ we also have $\rho^0=1$. Thus, even though each firm uses an EO classifier, the level of $v$-EOC is $\beta(1-\beta)\cdot{k-2}$. Furthermore, the utility of borrowers in group $A=0$ is $k(1-\beta)$, since all borrowers who receive an offer actually receive two offers. In group $A=1$, on the other hand, the expected utility of borrowers is $k(1-\beta)^2+2\beta(1-\beta)$.

Now,
\[ k(1-\beta)^2+2\beta(1-\beta)-k(1-\beta) = \beta(1-\beta)(2-k).\]
This implies that the level of $v$-EOC is $\beta(1-\beta)\abs{2-k}$. However, note that, if $k\in [1,2)$ then the difference is positive, whereas if $k>2$ then the difference is negative. This implies that, in the former case, the expected utility is higher in group $A=1$, whereas in the latter case, the expected utility is higher in group $A=0$.

The main insights from this section are, first, that two $v$-EO classifiers can lead to a positive $v$-EOC even for more general utility functions, and second, that whether or not the ``disadvantaged'' group is the one that suffers under a positive $v$-EOC depends on the utility function (i.e., whether $k\in[1,2)$ or $k>2$).

\subsubsection{Generalization: more than two classifiers}\label{apx:many-classifiers}
Suppose that instead of two classifiers, the set $L$ contains $n$ classifiers. We provide two results. First, we characterize the worst-case EOC with $n$ classifiers. Second, we generalize the analysis of Example~\ref{ex:third-party}  and show that, when classifiers are uncorrelated on one group but fully correlated on the other group, the level of EOC strictly increases (and so worsens) with $n$.

\begin{proposition}\label{prop:many-lenders}
For $n$ EO classifiers $c_1,\ldots,c_n$ with false-negative rates $\beta_1,\ldots,\beta_n$, the worst-case level of EOC is $\min_{i\in L} \beta_i - \max\left\{0,  \sum_{j\in L}\beta_j - 1\right\}$. If $\beta_i=\beta\leq 1-1/n$ for all $i$,  then the worst-case level of EOC is $\beta$.
\end{proposition}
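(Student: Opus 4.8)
The plan is to reduce the EOC level to the worst-case probability that a deserving borrower receives \emph{no} offer at all, and then to bound that probability with Fr\'echet--Hoeffding-type inequalities for a family of Bernoulli events with prescribed marginals. For a group $a$, let $p_a=\pr{R(X,A)=0\mid Y=1,A=a}$ be the probability that a deserving group-$a$ borrower is rejected by all $|L|$ lenders. Since $d(x,a)=\pr{R(x,a)\geq 1}$, the tower property gives $\E{d(X,A)\mid Y=1,A=a}=1-p_a$, so the EOC level equals $\abs{p_0-p_1}$. Moreover, for fixed $a$, a deserving group-$a$ borrower is rejected by all lenders exactly when every classifier is a false negative on her, and because each $c_i$ is EO the event $\{c_i(X,A)=0\}$ has probability $\beta_i$ conditional on $\{Y=1,A=a\}$ for both values of $a$. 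Thus $p_a$ is the probability that all events in some family of $|L|$ events with marginal probabilities $\beta_1,\dots,\beta_{|L|}$ occur simultaneously.

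The upper bound then follows from two elementary facts about such families: the simultaneous-occurrence probability is at most $\min_i\beta_i$ (monotonicity) and at least $\max\{0,\sum_i\beta_i-(|L|-1)\}$ (a union bound on the complementary events, since $1-\pr{\bigcup_i\{c_i=0\}}\geq 1-\sum_i(1-\beta_i)$). Hence both $p_0$ and $p_1$ lie in the interval $[\max\{0,\sum_i\beta_i-(|L|-1)\},\ \min_i\beta_i]$, so the EOC level $\abs{p_0-p_1}$ is at most its width, $\min_i\beta_i-\max\{0,\sum_i\beta_i-(|L|-1)\}$. (For $|L|=2$ this is exactly the worst case in Proposition~\ref{prop:worst-case-EO}.)

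For tightness I would construct a distribution $\D$ and EO classifiers that place $p_0$ and $p_1$ at the two endpoints of this interval. Take $X=[0,1)$, viewed as a circle, put all probability mass on $Y=1$, let $A$ be uniform on $\{0,1\}$, and let $X\mid A=a$ be uniform. On group $0$, set $c_i(x,0)=1$ iff $x\geq\beta_i$; the rejection sets $[0,\beta_i)$ are nested, so $p_0=\min_i\beta_i$. On group $1$, give classifier $i$ a rejection arc of length $\beta_i$, placing the complementary ``offer'' arcs (of total length $\sum_i(1-\beta_i)$) around the circle to be pairwise disjoint whenever they fit, and to cover the circle otherwise; a direct computation gives $p_1=\max\{0,\sum_i\beta_i-(|L|-1)\}$. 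Each $c_i$ rejects a $\beta_i$-fraction of deserving borrowers in \emph{each} group, hence is EO, and $\abs{p_0-p_1}$ realizes the claimed worst case. The second claim is then immediate: with $\beta_i\equiv\beta$, the worst case is $\beta-\max\{0,|L|\beta-(|L|-1)\}$, and $\beta\leq 1-1/|L|$ is precisely the condition $|L|\beta\leq|L|-1$ that kills the second term, leaving $\beta$.

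The routine parts are the reduction to $\abs{p_0-p_1}$ and the upper bound, which mirror the $|L|=2$ argument. The main obstacle is the tightness direction for $|L|>2$: unlike the familiar pairwise bound, one must exhibit a genuine coupling driving the $|L|$-fold simultaneous-rejection probability all the way down to $\max\{0,\sum_i\beta_i-(|L|-1)\}$ and verify that the arc construction both achieves this value and is realizable by classifiers that remain EO on both groups.
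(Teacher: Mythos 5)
Your proof is correct and follows essentially the same route as the paper's: reduce the EOC level to the difference $\abs{p_0-p_1}$ between the two groups' probabilities that all $n$ classifiers simultaneously reject a deserving borrower, then push one group's probability to its maximum (nested misclassification sets, giving $\min_i\beta_i$) and the other's to its minimum (disjoint correctly-classified sets, giving the Fr\'echet--Hoeffding lower endpoint). You are more explicit than the paper on both halves: you prove the two-sided bound $p_a\in\left[\max\{0,\sum_i\beta_i-(n-1)\},\,\min_i\beta_i\right]$ for every group, so the upper bound on $\abs{p_0-p_1}$ is a genuine argument rather than an appeal to extremal configurations, and your circle construction exhibits tightness while keeping each classifier EO on both groups. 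The one substantive divergence is the constant: you obtain $\max\{0,\sum_i\beta_i-(n-1)\}$, whereas the proposition (and the paper's proof) write $\max\{0,\sum_j\beta_j-1\}$. Your constant is the correct one---if each classifier correctly classifies a disjoint set of measure $1-\beta_i$, the residual measure is $1-\sum_i(1-\beta_i)=\sum_i\beta_i-(n-1)$---and it is the only version consistent with the $n=2$ case of Proposition~\ref{prop:worst-case-EO} and with the proposition's second claim, since $\beta-\max\{0,n\beta-(n-1)\}=\beta$ exactly when $\beta\le 1-1/n$, while the stated formula would instead require $\beta\le 1/n$. So the ``$-1$'' in the statement should be read as a typo for ``$-(n-1)$,'' and your argument proves the corrected statement. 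Two trivial slips to fix: in your union-bound parenthetical the union should be over $\{c_i=1\}$, not $\{c_i=0\}$ (the bound you state is the right one), and putting all mass on $Y=1$ leaves false-positive rates undefined, so reserve an arbitrarily small mass for $Y=0$ even though EO and EOC only involve $Y=1$.
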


We now analyze a further extension of Example~\ref{ex:third-party}, in which we show that the level of EOC is strictly increasing in the number of lenders.
\paragraph{Example~\ref{ex:third-party} continued} 
Suppose there are $n$ lenders. As in the original example, each lender has vast, distinct data on group $A=1$, and trains a classifier to predict loan repayment. No lender has sufficient data on group $A=0$, so both outsource to a third-party, who provides each lender with an (identical) classifier. Suppose all classifiers have the same false-negative rate $\beta$. Thus, a firm that uses its own classifier on $A=1$ and the third-party classifier on $A=0$ is in practice using an EO classifier. Suppose also that, on $A=1$, the firms' individual classifiers misclassify positive instances independently of other classifiers' predictions:
\[ \pr{c_1(X,A)=\ldots=c_n(X,A)=0|A=1, Y=1} = \beta^n.\]
Additionally, since all lenders use the same classifier on $A=0$, the probability that a positive instance from $A=0$ is misclassified, $\pr{c_1(X,A)=\ldots=c_n(X,A)=0|A=0, Y=1}$, is equal to $\beta$. Thus, even though each firm uses an EO classifier, the level of EOC is $\beta-\beta^n$. Observe that this is increasing in $n$.

The main insight from this section is that rather than improving the situation, increasing the number of (EO) classifiers can actually lead to a larger EOC.

\subsection{Different sets of borrowers}\label{sec:disjoint}
Recall that each borrower is a triplet $(x,a,y)\in X\times A\times Y$, and that there is an underlying distribution $\D$ over $X\times A\times Y$. In this section we modify the model and suppose that each classifier $\ell$ serves only a subset $S_\ell \subseteq \supp(\D)$ of borrowers.  Let us assume that $S_1\cup S_2 = \supp(\D)$. In Section~\ref{sec:correlation} we assumed that $S_1=S_2=\supp(\D)$, but in this section we assume that $S_1$ and $S_2$ are only partially overlapping, and so that $S_1\neq S_2$. In the extreme case, the two subsets of borrowers are disjoint.

Formally, assume each classifier faces an underlying distribution $\D_\ell$ of borrowers, where $\D_\ell$ is equal to the distribution $\D$ conditional on $S_\ell$. Denote by $\gamma_\ell^a=\PR{A=a}{\D_\ell}$ the fraction of borrowers belonging to group $A=a$ that are served by classifier $\ell$, and by $\gamma^a$ the fraction of borrowers of group $a$ served by both lenders. Note that $\gamma_1^a+\gamma_2^a-\gamma^a = 1$.

In general, classifiers may serve different sets of borrowers and at the same time differ in other ways---they may have different error rates on borrowers served by both lenders, and may also have correlation between them (as in Section~\ref{sec:correlation}). In order to focus on the former, however, in this section we make two simplifying assumptions. 
First, we assume that each classifier's false-negative rate is the same for borrowers that are served exclusively by that classifier and for those that are served by both classifiers. Second, we assume that on instances $(x,a,y)\in S_1\cap S_2$ the classifiers are uncorrelated. Formally,

\begin{definition}\label{def:uncorrelated-classifiers}
Classifiers $c_1, c_2$ with false-positive rates $\beta_1, \beta_2$ are {\em uncorrelated} if (i) for each $\ell$, 
\begin{align*}
&\pr{c_\ell(X,A)=0|Y=1, (X,A,Y)\in S_1\cap S_2} \\
&~~~= \pr{c_\ell(X,A)=0|Y=1, (X,A,Y)\in S_\ell\setminus S_{3-\ell}},
\end{align*}
and (ii) for every $(x,1,a)\in S_1\cap S_2$ it holds that $\pr{c_1(x,a)=c_2(x,a)=0}=\beta_1\cdot\beta_2$.

\end{definition}
In particular, if $S_1=S_2=\supp(\D)$ as in Section~\ref{sec:correlation} then uncorrelated classifiers satisfy $\rho^0=\rho^1=0$, neutralizing the force identified by Proposition~\ref{prop:worst-case-EO}. This assumption thus isolates the effect of having different sets of borrowers. We now quantify this effect, and determine the worst case.

\begin{proposition}\label{prop:worst-case-EO-disjoint-random}
For two uncorrelated EO classifiers $c_1$ and $c_2$ with false-negative rates $\beta_1$ and $\beta_2$, the level of EOC is 
\[\abs{(\gamma_2^0-\gamma_2^1)\beta_1+(\gamma_1^0-\gamma_1^1)\beta_2+(\gamma^1-\gamma^0)\beta_1\beta_2}.\]
In the worst case, the level of EOC is $\max\{\beta_1, \beta_2\} - \beta_1\beta_2$.
\end{proposition}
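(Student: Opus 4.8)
The plan is to compute, for each group $a\in\{0,1\}$, the fraction $\E{d(X,A)\mid Y=1,A=a}$ of deserving group-$a$ borrowers who obtain at least one offer, by splitting these borrowers according to which lenders serve them. Every deserving group-$a$ borrower falls into exactly one of three classes: served by lender $1$ only, served by lender $2$ only, or served by both lenders; by the definitions of the $\gamma$'s these classes comprise, respectively, a $\gamma_1^a-\gamma^a$, a $\gamma_2^a-\gamma^a$, and a $\gamma^a$ fraction of group $a$, and they sum to $\gamma_1^a+\gamma_2^a-\gamma^a=1$. For a borrower served only by lender $\ell$ we have $R(x,a)=c_\ell(x,a)$, so $d(x,a)=\pr{c_\ell(x,a)=1}$; averaging over the deserving group-$a$ borrowers in $S_\ell\setminus S_{3-\ell}$ and using Definition~\ref{def:uncorrelated-classifiers}(i) (which equates the false-negative rate of $c_\ell$ on $S_\ell\setminus S_{3-\ell}$ with that on $S_\ell\cap S_{3-\ell}$) together with EO and the fact that these two sets partition $S_\ell$, this average equals $1-\beta_\ell$. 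For a borrower served by both lenders, $R(x,a)\ge 1$ unless both classifiers misclassify her, which by Definition~\ref{def:uncorrelated-classifiers}(ii) happens with probability $\beta_1\beta_2$; hence $d(x,a)=1-\beta_1\beta_2$.

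Assembling the three pieces and writing $g(a)=(\gamma_1^a-\gamma^a)\beta_1+(\gamma_2^a-\gamma^a)\beta_2+\gamma^a\beta_1\beta_2$, we obtain, using $\gamma_1^a+\gamma_2^a-\gamma^a=1$,
\[
\E{d(X,A)\mid Y=1,A=a}=(\gamma_1^a-\gamma^a)(1-\beta_1)+(\gamma_2^a-\gamma^a)(1-\beta_2)+\gamma^a(1-\beta_1\beta_2)=1-g(a).
\]
The EOC level is therefore $\abs{g(0)-g(1)}$, and substituting $\gamma_1^a-\gamma^a=1-\gamma_2^a$ and $\gamma_2^a-\gamma^a=1-\gamma_1^a$ (both from the same identity) rewrites it as $\abs{(\gamma_2^0-\gamma_2^1)\beta_1+(\gamma_1^0-\gamma_1^1)\beta_2+(\gamma^1-\gamma^0)\beta_1\beta_2}$, which is the claimed formula.

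For the worst case, the key observation is that $g(a)$ is a convex combination of $\beta_1$, $\beta_2$, and $\beta_1\beta_2$: the coefficients $\gamma_1^a-\gamma^a$, $\gamma_2^a-\gamma^a$, $\gamma^a$ are nonnegative and sum to $1$. Since $\beta_1,\beta_2\in[0,1]$ we have $\beta_1\beta_2\le\min\{\beta_1,\beta_2\}\le\max\{\beta_1,\beta_2\}$, so $g(a)$ always lies in $[\beta_1\beta_2,\max\{\beta_1,\beta_2\}]$, whence $\abs{g(0)-g(1)}\le\max\{\beta_1,\beta_2\}-\beta_1\beta_2$. This bound is attained: assuming without loss of generality $\beta_1\ge\beta_2$, take the realizable configuration in which every group-$0$ borrower is served by lender $1$ alone ($\gamma_1^0=1$, $\gamma_2^0=\gamma^0=0$) and every group-$1$ borrower is served by both lenders ($\gamma_1^1=\gamma_2^1=\gamma^1=1$), with $c_1,c_2$ uncorrelated EO classifiers having the prescribed false-negative rates; then $g(0)=\beta_1$ and $g(1)=\beta_1\beta_2$, so the EOC level equals $\max\{\beta_1,\beta_2\}-\beta_1\beta_2$.

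The step I expect to demand the most care is the averaging in the first paragraph — verifying that the effective false-negative rate experienced by a deserving group-$a$ borrower served only by lender $\ell$ is exactly $\beta_\ell$. This is precisely what Definition~\ref{def:uncorrelated-classifiers}(i) is for: it forces the false-negative rate of $c_\ell$ on $S_\ell\setminus S_{3-\ell}$ to equal that on $S_\ell\cap S_{3-\ell}$, and since these sets partition $S_\ell$ their common value must be the overall rate $\beta_\ell$, with EO promoting this to a per-group statement. Everything after that is the bookkeeping of the second paragraph and a one-line convexity argument.
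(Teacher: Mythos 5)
Your derivation of the EOC formula is correct and follows essentially the same route as the paper: you partition the deserving group-$a$ borrowers into the three service regions ($S_1\setminus S_2$, $S_2\setminus S_1$, $S_1\cap S_2$) with masses $\gamma_1^a-\gamma^a$, $\gamma_2^a-\gamma^a$, $\gamma^a$, use Definition~\ref{def:uncorrelated-classifiers} to assign miss probabilities $\beta_1$, $\beta_2$, $\beta_1\beta_2$ on the respective regions, and simplify via $\gamma_1^a+\gamma_2^a-\gamma^a=1$; your worst-case attaining configuration ($\gamma_1^0=1$, $\gamma_2^0=\gamma^0=0$, $\gamma_1^1=\gamma_2^1=\gamma^1=1$ with $\beta_1\ge\beta_2$) is also exactly the paper's. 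Where you genuinely diverge is the worst-case upper bound: the paper bounds the difference by manipulating the coefficient identity $(\gamma_2^0-\gamma_2^1)+(\gamma_1^0-\gamma_1^1)=\gamma^0-\gamma^1$ and asserting $(\gamma_2^0-\gamma_2^1)\beta_1+(\gamma_1^0-\gamma_1^1)\beta_2\le(\gamma^0-\gamma^1)\max\{\beta_1,\beta_2\}$, an inequality that is delicate when the two coefficient differences have opposite signs; you instead observe that $g(a)=(\gamma_1^a-\gamma^a)\beta_1+(\gamma_2^a-\gamma^a)\beta_2+\gamma^a\beta_1\beta_2$ is a convex combination of $\beta_1$, $\beta_2$, $\beta_1\beta_2$, hence lies in $[\beta_1\beta_2,\max\{\beta_1,\beta_2\}]$ for every $a$, which immediately gives $\abs{g(0)-g(1)}\le\max\{\beta_1,\beta_2\}-\beta_1\beta_2$ uniformly over all admissible $\gamma$'s. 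This convexity argument is cleaner and more robust than the paper's, and it is a legitimate alternative; the only caveat, shared equally by the paper, is the implicit step (which you flag) that the per-group, per-region false-negative rate equals $\beta_\ell$, which goes slightly beyond the literal statement of Definition~\ref{def:uncorrelated-classifiers}(i) plus EO.
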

The intuition for Proposition~\ref{prop:worst-case-EO-disjoint-random} is similar to that of Proposition~\ref{prop:worst-case-EO}, with high (resp., low) overlap in the former playing the same role as high (resp., low) correlation in the latter.
To more clearly see the effect of different sets of served borrowers, consider the following corollary:
\begin{corollary}\label{cor:disjoint}
For two uncorrelated EO classifiers $c_1$ and $c_2$ with false-negative rates $\beta_1=\beta_2=\beta$, the level of EOC is $\beta\cdot\left(1-\beta\right)\cdot\abs{\gamma^0-\gamma^1}$.
\end{corollary}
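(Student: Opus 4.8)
The plan is to obtain the corollary as an immediate specialization of Proposition~\ref{prop:worst-case-EO-disjoint-random}, so no new machinery is needed. First I would set $\beta_1=\beta_2=\beta$ in the EOC formula of that proposition, which gives that the level of EOC equals
\[ \abs{\beta\bigl[(\gamma_2^0-\gamma_2^1)+(\gamma_1^0-\gamma_1^1)\bigr] + \beta^2(\gamma^1-\gamma^0)}. \]
The only remaining task is to simplify the first-order coefficient.

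To do so I would invoke the normalization identity $\gamma_1^a+\gamma_2^a-\gamma^a=1$ noted in Section~\ref{sec:disjoint}, which holds for each $a\in\{0,1\}$; rearranged, it reads $\gamma_1^a+\gamma_2^a=1+\gamma^a$. Subtracting the $a=1$ instance from the $a=0$ instance, the constants cancel and one obtains $(\gamma_1^0+\gamma_2^0)-(\gamma_1^1+\gamma_2^1)=\gamma^0-\gamma^1$, which is exactly the bracketed term above. Substituting, the EOC level becomes $\abs{\beta(\gamma^0-\gamma^1)+\beta^2(\gamma^1-\gamma^0)}=\abs{\beta(1-\beta)(\gamma^0-\gamma^1)}$, and since $0\le\beta\le 1$ this equals $\beta(1-\beta)\abs{\gamma^0-\gamma^1}$, as claimed.

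I do not expect any real obstacle: the corollary is a one-line algebraic consequence of the preceding proposition, stated separately only to isolate the role of the overlap gap $\abs{\gamma^0-\gamma^1}$. The single point that warrants attention is recognizing that the identity $\gamma_1^a+\gamma_2^a-\gamma^a=1$ is precisely what makes the two first-order coefficients telescope into $\gamma^0-\gamma^1$, after which the quadratic term contributes the $-\beta^2$ that completes the factor $\beta(1-\beta)$.
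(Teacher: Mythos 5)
Your proposal is correct and matches the paper's intended derivation: the corollary follows by substituting $\beta_1=\beta_2=\beta$ into Proposition~\ref{prop:worst-case-EO-disjoint-random} and using the identity $\gamma_1^a+\gamma_2^a-\gamma^a=1$ (the same normalization the paper's proof of that proposition already exploits) to collapse the linear terms into $\gamma^0-\gamma^1$, yielding $\beta(1-\beta)\abs{\gamma^0-\gamma^1}$. The paper gives no separate proof of the corollary precisely because it is this one-line specialization.
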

Thus, a large EOC occurs when there is a large imbalance in the respective sizes of the overlaps in served borrowers between the two groups, $\gamma^0$ and $\gamma^1$.

\section{Harmful fairness adjustment}\label{sec:harmful-fairness}
In trading off fairness and welfare (accuracy), it is quite intuitive that increasing the fairness properties of a classifier decreases its welfare properties. In this section, however, we show that increasing the fairness properties of individual classifiers may lead to decreases in the {\em fairness} properties under competition. In particular, we show that the level of EOC can {\em worsen} following fairness-{\em improving} post-processing of the individual classifiers.

In the following, we provide two theoretical examples where this occurs. The first is based on an imbalance in correlations, as in Section~\ref{sec:correlation}, and the second on an imbalance in the sets of served borrowers, as in Section~\ref{sec:disjoint}. While the examples are stylized, in Section~\ref{sec:experiments} we show that not only is this phenomenon empirically plausible, it is even likely.

For the following examples, we assume that the fairness adjustment is derived via post-processing \citep{hardt2016equality}: Given a learned classifier $c$, the EO classifier $\tilde c$ is {\em derived} from $c$, namely, it depends only on $A$ and the predictions $c(X,A)$. We also assume that $\tilde c$ minimizes squared-loss relative to all such derived EO classifiers. These simplifying assumptions imply the following lemma:

\begin{lemma}\label{lem:derived}
Fix a classifier with false-negative rates $\beta^0$ on group $A=0$ and $\beta^1$ on group $A=1$. 
 Then for each $\beta\in\{\beta^0,\beta^1\}$ there exists a distribution $\D$ and false positive-rates under which the  optimal derived EO classifier has false-negative rate $\beta$ on both groups.
\end{lemma}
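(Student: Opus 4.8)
The plan is to exploit the explicit structure of squared-loss-optimal post-processing \citep{hardt2016equality}. Since the base classifier $c$ is binary, a classifier derived from $(c(X,A),A)$ is, on group $a\in\{0,1\}$, pinned down by the two numbers $p_a=\pr{\tilde c(X,A)=1\mid c(X,A)=0,\,A=a}$ and $q_a=\pr{\tilde c(X,A)=1\mid c(X,A)=1,\,A=a}$, each in $[0,1]$. Writing $t_a=1-\beta^a$ and letting $\alpha^a$ denote the (still to be chosen) false-positive rate of $c$ on group $a$, the true- and false-positive rates of $\tilde c$ on group $a$ are $p_a\beta^a+q_a t_a$ and $p_a(1-\alpha^a)+q_a\alpha^a$, which is a linear image of $(p_a,q_a)$; hence the attainable (true-positive rate, false-positive rate) pairs on group $a$ form the parallelogram with vertices $(0,0),(t_a,\alpha^a),(\beta^a,1-\alpha^a),(1,1)$. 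I will arrange $\alpha^a+\beta^a<1$ for $a\in\{0,1\}$, so that $(t_a,\alpha^a)$ is the ``northwest'' vertex; then the least false-positive rate achievable at true-positive rate $\tau$ on group $a$, call it $\phi_a(\tau)$, is the convex piecewise-linear function whose graph is the broken line $(0,0)\to(t_a,\alpha^a)\to(1,1)$, with slope $\alpha^a/t_a$ on $[0,t_a]$ and slope $(1-\alpha^a)/\beta^a$ on $[t_a,1]$.

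Next I reduce the post-processing problem to one dimension. Equal Opportunity forces $\tilde c$ to have a common true-positive rate $\tau$ on both groups, so its (common) false-negative rate is exactly $1-\tau$ --- this is the quantity we must steer into $\{\beta^0,\beta^1\}$, equivalently $\tau$ into $\{t_0,t_1\}$. For fixed $\tau$ the two groups decouple in the squared loss, so the loss is minimized by choosing the minimal false-positive rate $\phi_a(\tau)$ on each group, and the value as a function of $\tau$ is
\[
G(\tau)=\pr{Y=1}\,(1-\tau)+\textstyle\sum_{a}\pr{A=a}\,\pr{Y=0\mid A=a}\,\phi_a(\tau).
\]
Since each $\phi_a$ is convex and piecewise linear with its only kink at $\tau=t_a$, $G$ is convex and piecewise linear with kinks only at $t_0$ and $t_1$, and the jump of its derivative at $t_a$ equals $\pr{A=a}\pr{Y=0\mid A=a}\bigl((1-\alpha^a)/\beta^a-\alpha^a/t_a\bigr)$, which is strictly positive precisely because $\alpha^a+\beta^a<1$. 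Consequently the optimal derived EO classifier has false-negative rate $1-\tau^\star$, where $\tau^\star$ is the (essentially unique) point at which $G'$ changes sign; and $\tau^\star=t_a$ holds iff the left-derivative of $G$ at $t_a$ is $\le 0$ and the right-derivative is $\ge 0$. I should also record that at such a $\tau^\star\in\{t_0,t_1\}$ the minimizing $(p_a,q_a)$ is unique, so ``the'' optimal derived EO classifier is well defined.

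It remains to pick $\D$ and $\alpha^0,\alpha^1$ so that the sign change lands on the desired kink. The case $\beta^0=\beta^1$ is trivial, so assume $\beta^0\ne\beta^1$, say $\beta^0<\beta^1$ (hence $t_0>t_1$), and fix any $\alpha^0,\alpha^1$ with $\alpha^a+\beta^a<1$. Let the two groups be equiprobable with equal positive rate $\mu=\pr{Y=1\mid A=0}=\pr{Y=1\mid A=1}$, so $\pr{Y=1}=\mu$ and $\pr{Y=0\mid A=a}=1-\mu$. Writing out the one-sided derivatives of $G$ at $t_0$ using the frontier slopes above, ``$G'$ changes sign at $t_0$'' becomes the two-sided bound
\[
\tfrac12\!\left(\tfrac{\alpha^0}{t_0}+\tfrac{1-\alpha^1}{\beta^1}\right)\ \le\ \tfrac{\mu}{1-\mu}\ \le\ \tfrac12\!\left(\tfrac{1-\alpha^0}{\beta^0}+\tfrac{1-\alpha^1}{\beta^1}\right),
\]
and the analogous computation at $t_1$ gives
\[
\tfrac12\!\left(\tfrac{\alpha^0}{t_0}+\tfrac{\alpha^1}{t_1}\right)\ \le\ \tfrac{\mu}{1-\mu}\ \le\ \tfrac12\!\left(\tfrac{\alpha^0}{t_0}+\tfrac{1-\alpha^1}{\beta^1}\right).
\]
In each display the two sides are finite, strictly positive, and the left side is strictly smaller than the right (again by $\alpha^a+\beta^a<1$), so each defines a nonempty interval of positive reals; since $\mu\mapsto\mu/(1-\mu)$ is a bijection from $(0,1)$ onto $(0,\infty)$, we may choose $\mu$ interior to the relevant interval. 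The first choice yields an optimal derived EO classifier with false-negative rate $\beta^0$ on both groups, the second one with false-negative rate $\beta^1$ on both groups, as required.

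The main obstacle, and the only genuinely non-mechanical step, is getting the geometry right: correctly identifying the attainable region as a parallelogram, reading off its efficient frontier, and --- crucially --- verifying that the induced scalar objective $G$ is convex, so that equating the signs of the two one-sided derivatives at a single kink certifies global rather than merely local optimality. After that, matching the free scalar $\pr{Y=1}$ to a nonempty interval is routine. A final minor point is to exclude or separately dispatch degenerate parameter values (for instance $\beta^a\in\{0,1\}$, where a kink of $\phi_a$ sits at an endpoint of $[0,1]$), which does not affect the argument.
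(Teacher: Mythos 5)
Your argument is correct (up to the degenerate endpoint cases you flag, which the paper's own proof also glosses over), and it reaches the lemma by a genuinely different execution of the same starting point, namely the characterization of derived classifiers from \citet{hardt2016equality} as the per-group parallelogram $\convhull\{(0,0),\lambda^a(c),\lambda^a(1-c),(1,1)\}$. The paper fixes $\D$ to be uniform, argues (somewhat informally) that the optimal derived EO classifier must be one of two vertex-type solutions --- raise group $0$'s true-positive rate to $1-\beta^1$, or lower group $1$'s to $1-\beta^0$ --- computes the flip-cost of each, and then chooses the false-positive rates so that the desired option is the cheaper one. You instead fix the false-positive rates (requiring only $\alpha^a+\beta^a<1$), collapse the constrained problem to the one-dimensional convex piecewise-linear function $G(\tau)$ of the common true-positive rate, and tune the base rate $\mu$ so that the subgradient condition holds exactly at the desired kink $t_a$; I checked your one-sided derivative computations and both intervals, and they are right. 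What your route buys is a clean global-optimality certificate (convexity of $G$) and uniqueness of the optimum via strict interiority of $\mu$, which is tighter than the paper's ``no interior optimum, mixtures are suboptimal'' reasoning; what the paper's route buys is a fixed, very simple $\D$ with the tuning done entirely through $\alpha^0,\alpha^1$. One discrepancy to flag explicitly: the paper's linear program minimizes $\mathrm{E}\left[\ell(\tilde c, c)\right]$, i.e., disagreement with the base classifier (its costs are fractions of flipped predictions), whereas your $G$ is the misclassification error against $Y$, the objective of the original post-processing paper. The lemma's phrase ``minimizes squared-loss'' accommodates either reading, and your construction adapts to the disagreement objective (the per-group cost of hitting a common true-positive rate $\tau$ is again convex and V-shaped with minimum at $t_a$), but as written your two intervals are specific to the error-against-$Y$ objective, so you should state which loss you are using.
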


The proof of Lemma~\ref{lem:derived} follows from the fact that the optimal derived classifier can be found by a linear program \citep{hardt2016equality}, together with two observations: (i) when minimizing squared-loss subject to EO, a solution can be found at a vertex of the polytope formed by the constraints, and (ii) for some $\D$ and false-positive rates these vertices contain points where $\beta\in\{\beta^0,\beta^1\}$.

We now turn to our first example. In this example, the reason the fairness adjustment worsens EOC is the difference in correlations between classifiers, as described in Section~\ref{sec:correlation}.
\begin{ex}\label{ex:harmful-fairness-correlation}
Both classifiers $c_\ell$ have false-negative rates $\beta^0_\ell=0.1$ and $\beta^1_\ell=0.2$. The correlations are $\rho^0=1$ and $\rho^1=0.375$ (perhaps both lenders outsource to the same third-party for predictions about group $A=0$, as in Example~\ref{ex:third-party}). Furthermore, suppose that $\D$ and false-positive rates are such that the fairness adjustments lead to classifiers $\tilde c_\ell$ with false-negative rates $\tilde \beta^0_\ell=\tilde \beta^1_\ell=0.1$ (guaranteed to exist by Lemma~\ref{lem:derived}). Such fairness adjustments are accomplished by randomizing each $c_\ell$'s predictions only when $A=1$ and $c_\ell(X,A)=0$, and this leads to new correlations $\tilde\rho^0 = 1$ and $\tilde\rho^1< 1$. 

Observe that, before fairness adjustment, neither classifier is EO, but after fairness adjustment, both classifiers are EO. However, observe also that, before fairness adjustment, the classifiers {\em are} EOC: First, 
$\pr{c_1(X,A)=c_2(X,A)=0|Y=1, A=0} = 0.1$, because the classifiers are fully correlated on $A=0$. Second, by (\ref{eqn:both-fn}),
\begin{align*}
&\pr{c_1(X,A)=c_2(X,A)=0|Y=1, A=1}\\
&~~ = \rho^1\sigma_1\sigma_2+\beta_1\beta_2 = 0.375\cdot 0.2\cdot 0.8 + 0.2\cdot 0.2 = 0.1.
\end{align*}
Finally, observe that after fairness adjustment, the classifiers are no longer EOC. This is because the probability both classifiers misclassify a positive instance in $A=0$ remains the same, namely,
$\pr{\tilde c_1(X,A)=\tilde c_2(X,A)=0|Y=1, A=0} = 0.1$. The probability of misclassification in $A=1$, however, is now different:
\begin{align*}
&\pr{\tilde c_1(X,A)=\tilde c_2(X,A)=0|Y=1, A=1}\\
&~~ = \tilde \rho^1\cdot 0.1\cdot 0.9 + 0.1\cdot 0.1 < 0.1,
\end{align*}
where the inequality holds since $\tilde \rho^1 < 1$.
\end{ex}

We now turn to our second example, in which the reason the fairness adjustment worsens EOC is the difference in sets of served borrowers, as described in Section~\ref{sec:disjoint}.

\begin{ex}\label{ex:harmful-fairness-disjoint}
Let $S_1=\supp(\D)$ and $S_2=\supp(\D|A=1)$. Note that this implies that $\gamma^0=0$ and $\gamma^1=1$. Suppose $c_2$ is a perfect classifier, with $\beta_2=\alpha_2=0$. Classifier $c_1$ is a perfect classifier on group $A=0$ only, and has false-negative rate $\beta_1^0=0$ on group $A=0$ and false-negative rate $\beta_1^1=\beta>0$ on group $A=1$. Note that $c_1$ is not EO, and for $c_2$ the issue is moot since $c_2$ serves only borrowers with $A=1$. However, note that the classifiers are EOC: a deserving borrower from $A=1$ will always get a loan offer from $c_2$, and a deserving borrower from $A=0$ will always get a loan offer from $c_1$.

Now suppose $c_1$ undergoes fairness adjustment to $\tilde c_1$ by post-processing, namely, it is derived from $c_1$. Suppose also that false-positive rates and $\D$ are such that, after adjustment, the false-negative rates on both groups are $\tilde\beta_1=\beta$ (guaranteed to exist by Lemma~\ref{lem:derived}). In this case, deserving borrowers from $A=1$ will still always get a loan offer from $c_2$, but deserving borrowers from $A=0$ will only get an offer from $\tilde c_1$ with probability $1-\beta$ (and never get an offer from $c_2$). Thus, the classifiers are no longer EOC.
\end{ex}

\section{Experiments}\label{sec:experiments}
In order to explore the prevalence of our theoretical results we ran several experiments.\footnote{Code is available at \url{https://github.com/eilamshapira/FairnessUnderCompetition}.} We describe and report results from the first three here, and the remaining ones in Appendix~\ref{apx:experiments}. For the first three we used Lending Club loan data for the years 2007-2015, a dataset that includes roughly 890,000 peer-to-peer loans through \citet{lendingclub}.
We used simple classifiers to predict the \verb'loan_status' outcome, and, in particular, whether or not the loan was fully paid, given a set of features that included the loan amount, the interest rate, the number of installments, and the borrower's annual income. 
As there is no demographic information in the public data, the protected attributes $A$ we used in our experiments were whether or not individuals have a mortgage. We compared the performance in terms of EO and EOC before and after the corresponding fairness adjustment. The fairness adjustment was implemented using the open-source \citet{fairlearn} package.

In the experiments we simulated a situation in which two firms compete over the same pool of borrowers, $S_1=S_2$, as in Proposition~\ref{prop:worst-case-EO}. The differences between the firms' classifiers were captured in three different ways in the respective experiments:
\begin{itemize}
\item[\textbf{Exp.\ 1}]  $c_1$ was a logistic regression and $c_2$  a decision tree, but their training data was identical.
\item[\textbf{Exp.\ 2}] Both classifiers were logistic regressions, but their training data was disjoint: one was trained on random examples with a short loan term, and one with a long loan term.
\item[\textbf{Exp.\ 3}]  $c_1$ was a logistic regression and $c_2$  a decision tree, and their training data was disjoint (as in Exp.\ 2).
\end{itemize}

In all three experiments, the training data consisted of a range of $300-100,000$ random examples. We ran each experiment with each size of training set 500 times.\footnote{Running all experiments took a few hours on a home computer.} For each run we calculated the EO of each classifier as well as the EOC level, both before and after the fairness adjustments. The most relevant statistic was then the comparison between the EOC level before fairness adjustments and the EOC level after fairness adjustments. We calculated the percentage of times the EOC level was higher after adjustment than before, as well as standard errors of the mean.

To get a sense of the results, consider the following selected run from experiment 3: Before adjustments, the values were $EO_1=0.597192\%$, $EO_2=3.430634\%$, and $EOC= \mathbf{0.020774\%}$. Thus, even though both classifiers were somewhat biased, under competition the EOC was nearly 0. However, once the individual classifiers were adjusted for fairness, the values were $\tilde{EO}_1=0.403418\%$, $\tilde{EO}_2=0.985511\%$, and $\tilde{EOC}= \mathbf{0.444052\%}$. Thus, even though each individual classifier became more fair after adjustment, the result under competition became worse.

To examine the prevalence of this phenomenon we counted the percentage of times the EOC became worse after fairness adjustment. Table~\ref{fig:EOC} reports the results: The rows indicate the experiment number, the columns indicate the size of the training set, and the values in the table are the 95\% confidence intervals, in percentages, for the probability that the EOC level was worse after adjustment. We found this to be surprisingly common even for large training sets. For instance, for training sets of size 100k (out of a dataset of size roughly 900k) it occurred 30.1\%, 15.8\%, and 17.4\% of the time in experiments 1, 2, and 3, respectively. 

We note that the effect diminishes as the size of the training set increases. The diminishing of the effect  for large training sets holds a similar message as the results from Section~\ref{sec:EOC}, where we show that the magnitude of the EOC level is roughly the same as that of the false-negative rates. For very large training sets the false-negative rates diminish to 0, and so  the levels of EO and EOC diminish in turn.

In addition to examining the prevalence of the effect, we also measured its magnitude. For each run, if the EOC level was higher after fairness adjustment than before, we measured the factor by which the EOC level increased. For each experiment we then calculated the average factor by which the EOC level increased (conditional on increasing), as well as the standard error of the mean. The results are reported in Appendix~\ref{apx:experiments}. However, to get a sense of the orders-of-magnitude consider the following. For training sets of size 100k (out of a dataset of size roughly 900k), the EOC level increased by a mean factor of 19, 1.3, and 3.1 in experiments 1, 2, and 3, respectively. 

In Appendix~\ref{apx:experiments} we also report on additional experiments we ran. In particular, we ran an experiment with $S_1\neq S_2$, as in Proposition~\ref{prop:worst-case-EO-disjoint-random}, and simulated the situation described in Example~\ref{ex:harmful-fairness-disjoint}. We also ran experiments in which classifiers' training sets were independently chosen, experiments using a different dataset \citep[specifically,][]{adult}, and experiments with three rather than two competing classifiers. For all experiments, the results were similar to the ones reported here.

\begin{table}
  \caption{95\%-CI for likelihood EOC level increased following fairness adjustment}
  \label{fig:EOC}
  \centering
  \begin{tabular}{ccccccc}
  \toprule
                            & 300                         & 1k                          & 3k                          & 10k                         & 30k                         & 100k                                                \\ 
                         \cmidrule(r){2-7}
{Exp.\ 1} & {[75.0, 82.2]} & {[68.0, 76.4]} & {[55.6, 64.2]} & {[49.4, 58.2]} & {[42.0, 50.6]} & {[26.2, 34.0]}   \\ 
{Exp.\ 2} & {[75.6, 82.8]} & {[65.2, 73.8]} & {[51.8, 60.8]} & {[35.4, 44.2]} & {[25.8, 33.8]} & {[12.6, 19.0]}   \\ 
{Exp.\ 3} & {[74.2, 81.2]} & {[63.4, 71.2]} & {[50.8, 59.6]} & {[35.6, 43.8]} & {[27.6, 36.2]} & {[14.2, 20.6]}  \\ 
\bottomrule
\end{tabular}
\end{table}

\section{Discussion and future work}\label{sec:discussion}
Our main insight is that individual fairness is neither necessary nor sufficient for ecosystem fairness. Thus, when making fairness adjustments, their effects on the ecosystem should be considered. An interesting question relates to the kinds of regulations that could lead to ecosystem fairness while also maintaining the benefits of having multiple competing classifiers.\footnote{Dividing responsibility, for example by allowing only one EO classifier to serve each borrower, would lead to ecosystem fairness, but would not deliver the benefits of having multiple classifiers.}

In our analysis we made several simplifying assumptions. First, we focused on EO and EOC. In Appendix~\ref{apx:ED-DP}, however, we extend the definitions and analyses to two other common notions of fairness---namely, Equalized Odds and Demographic Parity. In particular, we define variants of these notions for a setting with multiple classifiers, and then show that versions of Propositions~\ref{prop:worst-case-EO} and~\ref{prop:worst-case-EO-disjoint-random} hold for them as well: If the correlations or overlaps in served borrowers between classifiers differ across groups, then classifiers that are fair will not be fair under competition.

Second, for our welfare-based notions of $v$-EO and $v$-EOC, we focused on simple utility functions that satisfy Assumptions~\ref{0-1} or~\ref{0-1-k}. A natural direction that we leave for future work is to consider more general preferences, including ones where deserving borrowers' utilities may differ from others', and where the utility function may depend on any of $(x, a, y)$.

\begin{ack}
Research supported by funding from the Israel Science Foundation, award number 2039/24 (Gradwohl), and by funding from the European Research Council (ERC) under the European Union's Horizon 2020 research and innovation programme, grant number 740435 (Tennenholtz).
\end{ack}

\bibliography{fairnessCompetition}

\begin{thebibliography}{33}
\expandafter\ifx\csname natexlab\endcsname\relax\def\natexlab#1{#1}\fi
\expandafter\ifx\csname url\endcsname\relax
  \def\url#1{\texttt{#1}}\fi
\expandafter\ifx\csname urlprefix\endcsname\relax\def\urlprefix{URL }\fi
\providecommand{\eprint}[2][]{\url{#2}}

\bibitem[{Agarwal et~al.(2018)Agarwal, Beygelzimer, Dud{\'\i}k, Langford and
  Wallach}]{agarwal2018reductions}
\textsc{Agarwal, A.}, \textsc{Beygelzimer, A.}, \textsc{Dud{\'\i}k, M.},
  \textsc{Langford, J.} and \textsc{Wallach, H.} (2018).
\newblock A reductions approach to fair classification.
\newblock In \textit{International Conference on Machine Learning}. PMLR,
  60--69.

\bibitem[{Angwin et~al.(2017)Angwin, Larson, Kirchner and
  Mattu}]{angwin2017minority}
\textsc{Angwin, J.}, \textsc{Larson, J.}, \textsc{Kirchner, L.} and
  \textsc{Mattu, S.} (2017).
\newblock Minority neighborhoods pay higher car insurance premiums than white
  areas with the same risk.
\newblock \textit{ProPublica (5 2017)}.

\bibitem[{Baker and Hawn(2022)}]{baker2022algorithmic}
\textsc{Baker, R.~S.} and \textsc{Hawn, A.} (2022).
\newblock Algorithmic bias in education.
\newblock \textit{International Journal of Artificial Intelligence in
  Education} 1--41.

\bibitem[{Barocas et~al.(2023)Barocas, Hardt and
  Narayanan}]{barocas2023fairness}
\textsc{Barocas, S.}, \textsc{Hardt, M.} and \textsc{Narayanan, A.} (2023).
\newblock \textit{Fairness and machine learning: Limitations and
  opportunities}.
\newblock MIT press.

\bibitem[{Bartlett et~al.(2022)Bartlett, Morse, Stanton and
  Wallace}]{bartlett2022consumer}
\textsc{Bartlett, R.}, \textsc{Morse, A.}, \textsc{Stanton, R.} and
  \textsc{Wallace, N.} (2022).
\newblock Consumer-lending discrimination in the {FinTech} era.
\newblock \textit{Journal of Financial Economics}, \textbf{143} 30--56.

\bibitem[{Becker and Kohavi(1996)}]{adult}
\textsc{Becker, B.} and \textsc{Kohavi, R.} (1996).
\newblock {Adult}.
\newblock UCI Machine Learning Repository.
\newblock {DOI}: https://doi.org/10.24432/C5XW20.

\bibitem[{Ben-Porat et~al.(2021)Ben-Porat, Sandomirskiy and
  Tennenholtz}]{ben2021protecting}
\textsc{Ben-Porat, O.}, \textsc{Sandomirskiy, F.} and \textsc{Tennenholtz, M.}
  (2021).
\newblock Protecting the protected group: {Circumventing} harmful fairness.
\newblock In \textit{Proceedings of the AAAI Conference on Artificial
  Intelligence}, vol.~35. 5176--5184.

\bibitem[{Ben-Porat and Tennenholtz(2017)}]{ben2017best}
\textsc{Ben-Porat, O.} and \textsc{Tennenholtz, M.} (2017).
\newblock Best response regression.
\newblock \textit{Advances in Neural Information Processing Systems},
  \textbf{30}.

\bibitem[{Boutilier et~al.(2024)Boutilier, Mladenov and
  Tennenholtz}]{boutilier2024recommender}
\textsc{Boutilier, C.}, \textsc{Mladenov, M.} and \textsc{Tennenholtz, G.}
  (2024).
\newblock Recommender ecosystems: {A} mechanism design perspective on holistic
  modeling and optimization.
\newblock In \textit{Proceedings of the AAAI Conference on Artificial
  Intelligence}, vol.~38. 22575--22583.

\bibitem[{Bower et~al.(2017)Bower, Kitchen, Niss, Strauss, Vargas and
  Venkatasubramanian}]{bower2017fair}
\textsc{Bower, A.}, \textsc{Kitchen, S.~N.}, \textsc{Niss, L.},
  \textsc{Strauss, M.~J.}, \textsc{Vargas, A.} and \textsc{Venkatasubramanian,
  S.} (2017).
\newblock Fair pipelines.
\newblock \textit{arXiv preprint arXiv:1707.00391}.

\bibitem[{Chouldechova and Roth(2018)}]{chouldechova2018frontiers}
\textsc{Chouldechova, A.} and \textsc{Roth, A.} (2018).
\newblock The frontiers of fairness in machine learning.
\newblock \textit{arXiv preprint arXiv:1810.08810}.

\bibitem[{Doleac and Hansen(2016)}]{doleac2016does}
\textsc{Doleac, J.~L.} and \textsc{Hansen, B.} (2016).
\newblock Does ``ban the box'' help or hurt low-skilled workers? {Statistical}
  discrimination and employment outcomes when criminal histories are hidden.
\newblock Tech. rep., National Bureau of Economic Research.

\bibitem[{Dwork et~al.(2012)Dwork, Hardt, Pitassi, Reingold and
  Zemel}]{dwork2012fairness}
\textsc{Dwork, C.}, \textsc{Hardt, M.}, \textsc{Pitassi, T.}, \textsc{Reingold,
  O.} and \textsc{Zemel, R.} (2012).
\newblock Fairness through awareness.
\newblock In \textit{Proceedings of the 3rd Innovations in Theoretical Computer
  Science Conference}. 214--226.

\bibitem[{Dwork and Ilvento(2019)}]{dwork2019fairness}
\textsc{Dwork, C.} and \textsc{Ilvento, C.} (2019).
\newblock Fairness under composition.
\newblock In \textit{10th Innovations in Theoretical Computer Science
  Conference}.

\bibitem[{Fairlearn(2025)}]{fairlearn}
\textsc{Fairlearn} (2025).
\newblock \url{https://fairlearn.org/}.
\newblock Accessed: 2025-04-30.

\bibitem[{Feng et~al.(2022)Feng, Gradwohl, Hartline, Johnsen and
  Nekipelov}]{feng2022bias}
\textsc{Feng, Y.}, \textsc{Gradwohl, R.}, \textsc{Hartline, J.},
  \textsc{Johnsen, A.} and \textsc{Nekipelov, D.} (2022).
\newblock Bias-variance games.
\newblock In \textit{Proceedings of the 23rd ACM Conference on Economics and
  Computation}. 328--329.

\bibitem[{Fuster et~al.(2022)Fuster, Goldsmith-Pinkham, Ramadorai and
  Walther}]{fuster2022predictably}
\textsc{Fuster, A.}, \textsc{Goldsmith-Pinkham, P.}, \textsc{Ramadorai, T.} and
  \textsc{Walther, A.} (2022).
\newblock Predictably unequal? {The} effects of machine learning on credit
  markets.
\newblock \textit{The Journal of Finance}, \textbf{77} 5--47.

\bibitem[{G{\'a}ndara et~al.(2024)G{\'a}ndara, Anahideh, Ison and
  Picchiarini}]{gandara2024inside}
\textsc{G{\'a}ndara, D.}, \textsc{Anahideh, H.}, \textsc{Ison, M.~P.} and
  \textsc{Picchiarini, L.} (2024).
\newblock Inside the black box: {Detecting} and mitigating algorithmic bias
  across racialized groups in college student-success prediction.
\newblock \textit{AERA Open}, \textbf{10} 1--15.

\bibitem[{Gupta(2019)}]{gupta2019algorithms}
\textsc{Gupta, A.} (2019).
\newblock Are algorithms sexist.
\newblock \textit{The New York Times}.

\bibitem[{Hardt et~al.(2016)Hardt, Price and Srebro}]{hardt2016equality}
\textsc{Hardt, M.}, \textsc{Price, E.} and \textsc{Srebro, N.} (2016).
\newblock Equality of opportunity in supervised learning.
\newblock \textit{Advances in Neural Information Processing Systems},
  \textbf{29}.

\bibitem[{Hort et~al.(2024)Hort, Chen, Zhang, Harman and Sarro}]{hort2024bias}
\textsc{Hort, M.}, \textsc{Chen, Z.}, \textsc{Zhang, J.~M.}, \textsc{Harman,
  M.} and \textsc{Sarro, F.} (2024).
\newblock Bias mitigation for machine learning classifiers: {A} comprehensive
  survey.
\newblock \textit{ACM Journal on Responsible Computing}, \textbf{1} 1--52.

\bibitem[{Jagadeesan et~al.(2023)Jagadeesan, Jordan, Steinhardt and
  Haghtalab}]{jagadeesan2023improved}
\textsc{Jagadeesan, M.}, \textsc{Jordan, M.}, \textsc{Steinhardt, J.} and
  \textsc{Haghtalab, N.} (2023).
\newblock Improved {Bayes} risk can yield reduced social welfare under
  competition.
\newblock \textit{Advances in Neural Information Processing Systems},
  \textbf{36} 66940--66952.

\bibitem[{Kleinberg and Raghavan(2021)}]{kleinberg2021algorithmic}
\textsc{Kleinberg, J.} and \textsc{Raghavan, M.} (2021).
\newblock Algorithmic monoculture and social welfare.
\newblock \textit{Proceedings of the National Academy of Sciences},
  \textbf{118} e2018340118.

\bibitem[{Kurland and Tennenholtz(2022)}]{kurland2022competitive}
\textsc{Kurland, O.} and \textsc{Tennenholtz, M.} (2022).
\newblock Competitive search.
\newblock In \textit{Proceedings of the 45th International ACM SIGIR Conference
  on Research and Development in Information Retrieval}. 2838--2849.

\bibitem[{{Lending Club}(2021)}]{lendingclub}
\textsc{{Lending Club}} (2021).
\newblock
  \url{https://www.kaggle.com/datasets/adarshsng/lending-club-loan-data-csv/data}.
\newblock {Open Database License (ODbL) 1.0}. Accessed: 2025-04-30.

\bibitem[{Liu et~al.(2018)Liu, Dean, Rolf, Simchowitz and
  Hardt}]{liu2018delayed}
\textsc{Liu, L.~T.}, \textsc{Dean, S.}, \textsc{Rolf, E.}, \textsc{Simchowitz,
  M.} and \textsc{Hardt, M.} (2018).
\newblock Delayed impact of fair machine learning.
\newblock In \textit{International Conference on Machine Learning}. PMLR,
  3150--3158.

\bibitem[{Mehrabi et~al.(2021)Mehrabi, Morstatter, Saxena, Lerman and
  Galstyan}]{mehrabi2021survey}
\textsc{Mehrabi, N.}, \textsc{Morstatter, F.}, \textsc{Saxena, N.},
  \textsc{Lerman, K.} and \textsc{Galstyan, A.} (2021).
\newblock A survey on bias and fairness in machine learning.
\newblock \textit{ACM computing surveys (CSUR)}, \textbf{54} 1--35.

\bibitem[{Nelsen(2006)}]{nelsen2006introduction}
\textsc{Nelsen, R.~B.} (2006).
\newblock \textit{An Introduction to Copulas}.
\newblock Springer.

\bibitem[{O'Neil(2016)}]{o2016weapons}
\textsc{O'Neil, C.} (2016).
\newblock \textit{Weapons of Math Destruction: How Big Data Increases
  Inequality and Threatens Democracy}.
\newblock Penguin Books Limited.

\bibitem[{Pessach and Shmueli(2022)}]{pessach2022review}
\textsc{Pessach, D.} and \textsc{Shmueli, E.} (2022).
\newblock A review on fairness in machine learning.
\newblock \textit{ACM Computing Surveys (CSUR)}, \textbf{55} 1--44.

\bibitem[{Ruggieri et~al.(2023)Ruggieri, Alvarez, Pugnana, Turini
  et~al.}]{ruggieri2023can}
\textsc{Ruggieri, S.}, \textsc{Alvarez, J.~M.}, \textsc{Pugnana, A.},
  \textsc{Turini, F.} \textsc{et~al.} (2023).
\newblock Can we trust {fair-AI}?
\newblock In \textit{Proceedings of the AAAI Conference on Artificial
  Intelligence}, vol.~37. 15421--15430.

\bibitem[{Telford(2019)}]{telford2019apple}
\textsc{Telford, T.} (2019).
\newblock {Apple Card} algorithm sparks gender bias allegations against
  {Goldman Sachs}.
\newblock \textit{Washington Post}.

\bibitem[{Ustun et~al.(2019)Ustun, Liu and Parkes}]{ustun2019fairness}
\textsc{Ustun, B.}, \textsc{Liu, Y.} and \textsc{Parkes, D.} (2019).
\newblock Fairness without harm: {Decoupled} classifiers with preference
  guarantees.
\newblock In \textit{International Conference on Machine Learning}. PMLR,
  6373--6382.

\end{thebibliography}
\newpage
\appendix

\begin{center}
\begin{Large}\textbf{Appendix}\end{Large}
\end{center}
\section{Proofs}\label{apx:proofs}

\begin{proof}[Proof of Proposition~\ref{prop:worst-case-EO}]
For each $a$, the Pearson correlation between $B_1^a$ and $B_2^a$ is
\[\rho^a = \frac{\pr{B_1^a=B_2^a=1}-\pr{B_1^a=1}\cdot\pr{B_2^a=1}}{\sigma_1\sigma_2}=\frac{\pr{B_1^a=B_2^a=1}-(1-\beta_1)(1-\beta_2)}{\sigma_1\sigma_2}.\]
Thus, \begin{equation} \pr{B_1^a=B_2^a=1}=\rho^a\sigma_1\sigma_2+(1-\beta_1)(1-\beta_2) \nonumber \end{equation}
and
\begin{align}
\pr{B_1^a=B_2^a=0} &= 1 - \left(\pr{B_1^a=1}+\pr{B_2^a=1}- \pr{B_1^a=B_2^a=1} \right) \nonumber\\
&= 1-(1-\beta_1)-(1-\beta_2)+\rho^a\sigma_1\sigma_2+(1-\beta_1)(1-\beta_2) \nonumber \\
&=\rho^a\sigma_1\sigma_2+\beta_1\beta_2. \label{eqn:both-fn}
\end{align}

The level of EOC is $\abs{\E{d(X,A)|Y=1,A=0} - \E{d(X,A)|Y=1,A=1}}$, where 
\begin{align}
\E{d(X,A)|Y=1,A=a}&=1-\pr{c_1(X,A)=c_2(X,A)=0|Y=1,A=a} \nonumber\\
&=1-\pr{B_1^a=B_2^a=0}. \nonumber
\end{align}
Putting the above together yields 
\begin{align*}
&\abs{\E{d(X,A)|Y=1,A=0} - \E{d(X,A)|Y=1,A=1}}\\
&~~~=\abs{\pr{B_1^0=B_2^0=0} -\pr{B^1_1=B_2^1=0}}\\
&~~~=\abs{\rho^0\sigma_1\sigma_2+\beta_1\beta_2-\rho^1\sigma_1\sigma_2-\beta_1\beta_2}
=\sigma_1\cdot\sigma_2\cdot\abs{\rho^0-\rho^1},
\end{align*}
as claimed.

Now, the worst case scenario maximizes $\abs{\rho^0-\rho^1}$, and so, for example, maximizes $\rho^0$ while minimizing $\rho^1$ (or vice versa).
For any $a$, the correlation can be written as $\rho^a = \frac{r^a-\beta_1\beta_2}{\sigma_1\sigma_2}$, where $\max\{0, \beta_1+\beta_2-1\} \leq r^a \leq \min\{\beta_1, \beta_2\}$ by the Fr\'echet-Hoeffding bounds \citep[see, e.g.,][]{nelsen2006introduction}. Plugging in $r^0=\min\{\beta_1, \beta_2\}$ and $r^1=\max\{0, \beta_1+\beta_2-1\}$ yields the result.

To see that this worst case can be achieved, and to obtain an intuitive explanation for the Fr\`echet-Hoeffding bounds in our setting, recall that \[r^a = \pr{B_1^a=B_2^a=1} = \pr{c_1(X,A)=c_2(X,A)=0|Y=1,A=a},\] the probability that both classifiers misclassify an instance with $Y=1$.
Given that false-negative rates are $\beta_1$ and $\beta_2$, the probability that both classifiers misclassify such an instance is at most $r^0=\min\{\beta_1, \beta_2\}$, and this occurs when the set of one classifier's misclassified instances are contained in the other's. 

The probability that both classifiers misclassify an instance with $Y=1$ is minimal when the instances on which the classifiers misclassify are maximally disjoint, and in this case the probability that both misclassify an instance is $r^1=\max\{0, \beta_1+\beta_2-1\}$. In particular, if $\beta_1+\beta_2\leq 1$ then the classifiers never misclassify the same positive instance.
\end{proof}

\begin{proof}[Proof of Proposition~\ref{prop:worst-case-EO-util}]
Recall from the proof of Proposition~\ref{prop:worst-case-EO} that 
\[ \pr{B_1^a=B_2^a=1}=\rho^a\sigma_1\sigma_2+(1-\beta_1)(1-\beta_2)\]
and 
\[\pr{B_1^a=B_2^a=0} =\rho^a\sigma_1\sigma_2+\beta_1\beta_2.\]
Thus, we have that
\begin{align*}
\pr{B_1^a=1 \cap B_2^a=0} &= \pr{B_1^a=1} - \pr{B_1^a=B_2^a=1} \\
&= 1-\beta_1-\rho^a\sigma_1\sigma_2-(1-\beta_1)(1-\beta_2)\\
&=\beta_2(1-\beta_1) - \rho^a\sigma_1\sigma_2
\end{align*}
and, analogously,
\[\pr{B_1^a=0 \cap B_2^a=1} = \beta_1(1-\beta_2) - \rho^a\sigma_1\sigma_2.\]

Thus, for each $a$,
\begin{align*}
&\E{v(X,A,Y,R(X,A))|Y=1,A=a}=k\cdot \pr{c_1(X,A)=c_2(X,A)=1|Y=1,A=a}\\
&\quad\quad+ \pr{c_1(X,A)\neq c_2(X,A)|Y=1,A=a}\\
&\quad=k\cdot \pr{B_1^a=B_2^a=1} + \pr{B_1^a\neq B_2^a}\\
&\quad=k \rho^a\sigma_1\sigma_2+k(1-\beta_1)(1-\beta_2) + \beta_2(1-\beta_1) + \beta_1(1-\beta_2)  - 2\rho^a\sigma_1\sigma_2\\
&\quad=(k-2) \rho^a\sigma_1\sigma_2 + k -(k-1)(\beta_1+\beta_2-\beta_1\beta_2).
\end{align*}

The level of $v$-EOC is thus
\begin{align*}
&\abs{\E{v(X,A,Y,R(X,A))|Y=1,A=0} - \E{v(X,A,Y,R(X,A))|Y=1,A=1}}\\
&\quad=\abs{(k-2) \rho^0\sigma_1\sigma_2 - (k-2) \rho^1\sigma_1\sigma_2}\\
&\quad= \sigma_1\sigma_2 \abs{(k-2)(\rho^0-\rho^1)}.
\end{align*}

As in the proof of Proposition~\ref{prop:worst-case-EO},  the worst case scenario maximizes $\abs{\rho^0-\rho^1}$, and so, for example, maximizes $\rho^0$ while minimizing $\rho^1$ (or vice versa).
Again, as before, the worst-case difference is $\frac{1}{\sigma_1\sigma_2}\left(\min\{\beta_1, \beta_2\}-\max\{0, \beta_1+\beta_2-1\}\right)$, which leads to the claimed worst case level of $v$-EOC. 
\end{proof}

\begin{proof}[Proof of Proposition~\ref{prop:many-lenders}]
The worst case occurs when the probability that all classifiers misclassify positive instances is minimal in group $A=1$ and maximal in group $A=0$ (or vice versa). To maximize
$\pr{c_1(X,A)=\ldots=c_n(X,A)=0|A=0, Y=1}$
the overlap in instances on which each $c_i$ misclassifies has to be maximal, which occurs when the instances are contained in one anther. Formally, if $\beta_1\leq\ldots\leq \beta_n$, then 
\[c_n(x,0)=1 \Rightarrow \ldots \Rightarrow c_1(x,0)=1.\]
In this case, 
\[ \pr{c_1(X,A)=\ldots=c_n(X,A)=0|A=0, Y=1} = \beta_n = \min_{i\in L} \beta_i.\]

To minimize
$\pr{c_1(X,A)=\ldots=c_n(X,A)=0|A=1, Y=1}$
the overlap in instances on which each $c_i$ misclassifies has to be minimal, which occurs when the instances are maximally disjoint. Equivalently, the overlap in positive instances in which the $c_i$'s correctly classify is maximally disjoint. If $\sum_j (1-\beta_j) \geq 1$ then
\[ \{(x,1,1): c_1(x,1)=\ldots=c_n(x,1)=0\} = \emptyset. \] 
Otherwise, in the worst case each classifier $c_i$ correctly classifies a unique set of $1-\beta_i$ positive instances, and in this case
\[ \pr{c_1(X,A)=\ldots=c_n(X,A)=0|A=0, Y=1} = \sum_{j\in L}\beta_j - 1.\]
Thus, the worst-case level of EOC is $\min_{i\in L} \beta_i - \max\left\{0,  \sum_{j\in L}\beta_j - 1\right\}$.

Finally, if $\beta_i=\beta\leq 1-1/n$ for all $i$ then $\min_{i\in L} \beta_i = \beta$ and $\max\left\{0,  \sum_{j\in L}\beta_j - 1 \right\}= 0$.
\end{proof}

\begin{proof}[Proof of Proposition~\ref{prop:worst-case-EO-disjoint-random}]
For any $a\in A$,
\begin{align*}
&\E{d(X,A)|Y=1,A=a}=1-\pr{c_1(X,A)=c_2(X,A)=0|Y=1,A=a}\\
&~~=1-\Big( \pr{ c_1(X,A)=0 \cap (X,A,Y)\in S^1\setminus S^2~\big\vert~ Y=1,A=a}\\
&~~~~~~~+\left.  \pr{c_2(X,A)=0 \cap (X,A,Y)\in S^2\setminus S^1~\big\vert~Y=1,A=a} \right.\\
&~~~~~~~+ \pr{c_1(X,A)=c_2(X,A)=0 \cap (X,A,Y)\in S^1\cap S^2~\big\vert~Y=1,A=a} \Big)\\
&~~=1-\big( \left(\gamma_1^a-\gamma^a\right)\beta_1 + \left(\gamma_2^a-\gamma^a\right)\beta_2 + \gamma^a\beta_1\beta_2\big).\\
&~~=1-\big( \left(1-\gamma_2^a\right)\beta_1 + \left(1-\gamma_1^a\right)\beta_2 + \gamma^a\beta_1\beta_2\big).
\end{align*}
The level of EOC is thus
\begin{align*}
&\big\lvert\E{d(X,A)|Y=1,A=0} - \E{d(X,A)|Y=1,A=1}\big\rvert\\
&~~~=\big\lvert(\gamma_2^0-\gamma_2^1)\beta_1+(\gamma_1^0-\gamma_1^1)\beta_2+(\gamma^1-\gamma^0)\beta_1\beta_2\big\rvert.
\end{align*} 

The worst case is constructed as follows.  Classifier $\ell = \arg\max_i \beta_i$ serves all borrowers from group $A=0$, and the other classifier $(3-\ell)$ does not serve any borrower from this group. Additionally, both classifiers serve all borrowers with $A=1$. Formally, $\gamma_\ell^0=\gamma_\ell^1=1$ and $\gamma_{3-\ell}^0=0$ and $\gamma_{3-\ell}^1=1$. This implies also that $\gamma^0=0$ and $\gamma^1=1$, and so yields a level of EOC equal to $\max\{\beta_1, \beta_2\} - \beta_1\beta_2$.

To see that this is maximal, observe that
\[\gamma_2^0-\gamma_2^1 + \gamma_1^0-\gamma_1^1 = \gamma_2^0+\gamma_1^0 - (\gamma_2^1-\gamma_1^1) = 1+\gamma^0 - (1+\gamma^1) = \gamma^0-\gamma^1.\]
Thus, $(\gamma_2^0-\gamma_2^1)\beta_1+(\gamma_1^0-\gamma_1^1)\beta_2 \leq (\gamma^0-\gamma^1) \cdot \max\{\beta_1, \beta_2\}$,
and so the level of EOC is at most 
\[(\gamma^0-\gamma^1)\left(\max\{\beta_1, \beta_2\} - \beta_1\beta_2\right) \leq \max\{\beta_1, \beta_2\} - \beta_1\beta_2.\]
\end{proof}

\begin{proof}[Proof of Lemma~\ref{lem:derived}]
For a classifier $c$ and a group $a\in A$, let $\lambda^a(c)=\left(\alpha^a, 1-\beta^a\right)$, where
$\alpha^a = \pr{c_L(X,A,Y)=1|Y=0, A=a}$ is the false-positive rate of the classifier in group $a$, and 
$1-\beta^a =  \pr{c_L(X,A,Y)=1|Y=1, A=a}$ is the true-positive rate of the classifier in group $a$.
\citet{hardt2016equality} show that any derived classifier $\tilde c$ satisfies $\lambda^a(\tilde c) \in \convhull \left\{ (0,0), \lambda^a(c), \lambda^a(1-c),(1,1)\right\}$, where $(1-c)$ is the classifier $c$ but with predictions flipped. They then show that the optimal derived classifier can be found using the following linear program:

\begin{align*}
\min_{\tilde c}\quad &\mathrm{E}\left[\ell(\tilde c, c)\right] \\
&\textup{s.t.} \quad \lambda^a(\tilde c) \in \convhull \left\{ (0,0), \lambda^a(c), \lambda^a(1-c),(1,1)\right\}, \quad \forall a\in A \\
&~\qquad                   \lambda^0_2(\tilde c) =  \lambda^1_2(\tilde c)
\end{align*}

The second constraint above states that the true-positive rates of the classifier are equal on both $a\in A$, and so the classifier $\tilde c$ must be EO. The optimization problem is a squared-loss minimization.

Let us fix the distribution $\D$ to be uniform over all of $X\times Y\times A$. Thus, for each group $a$ and label $y\in \{0,1\}$, we have that $\pr{A=a}=\pr{Y=y}=1/2$. We also assume, without loss of generality, that $\beta^0>\beta^1$.

 The optimal derived classifier $\tilde c$ is then one of the following:
\begin{enumerate}
\item Let $\tilde c \equiv c$ except that, on some fraction of instances with $c(x,0)=0$ fix $\tilde c(x,0)=1$, so that 
\[\lambda^0(\tilde c) = (1-\delta^0)\lambda^0(c)+\delta^0\cdot (1,1) = (\tilde\alpha^0, 1-\beta^1)\]
for $\delta^0 = (\beta^0-\beta^1)/\beta^0$ and some $\tilde\alpha^0$. In this case, $\tilde \beta^0=\beta^1$.
The cost of this in terms of squared loss is the fraction of instances where $c(x,0)=0$  that were flipped to $\tilde c(x,0)=1$, namely,
$\delta^0(1-\alpha^0+\beta^0)/4$. 
\item Let $\tilde c \equiv c$ except that, on some fraction of instances with $c(x,1)=1$ fix $\tilde c(x,1)=0$, so that
\[\lambda^1(\tilde c) = (1-\delta^1)\lambda^1(c)+\delta^1\cdot (0,0) = (\tilde\alpha^1, 1-\beta^0)\] 
for $\delta^1=(\beta^0-\beta^1)/(1-\beta^1)$ and some  $\tilde\alpha^1$. In this case, $\tilde \beta^1=\beta^0$.
The cost of this in terms of squared loss is the fraction of instances where $c(x,1)=1$ that were flipped to $\tilde c(x,1)=0$, namely, $\delta^1(\alpha^1+1-\beta^0)/4$. 
\end{enumerate}

To see that one of the above is an optimal derived classifier, note first that the optimal solution cannot be in the strict interior of the polytope. Furthermore, if $(1-\alpha^0+\beta^0)/\beta^0 \neq (\alpha^1+1-\beta^0)/(1-\beta^1)$ then considering a mixture of (1) and (2)---on some fraction of instances with $c(x,0)=0$ setting $\tilde c(x,0)=1$ and on some fraction of instances with $c(x,1)=1$ setting $\tilde c(x,1)=0$---is suboptimal. Finally, note that setting some fraction of instances with $c(x,0)=1$ to $\tilde c(x,0)=0$ only increases the EO, as does setting some fraction of instances with $c(x,1)=0$ to $\tilde c(x,1)=1$, and so cannot be part of an optimal classifier.

Finally, an optimal derived EO classifier in which the false-negative rate is $\beta^1$ (resp., $\beta^0$) on both groups is one where false-positive rates satisfy $(1-\alpha^0+\beta^0)/\beta^0 < (\alpha^1+1-\beta^0)/(1-\beta^1)$ (resp., $(1-\alpha^0+\beta^0)/\beta^0 > (\alpha^1+1-\beta^0)/(1-\beta^1)$). Under equality, both classifiers are optimal.
\end{proof}

\section{Additional fairness notions}\label{apx:ED-DP}
In this section we discuss extensions of our results to two additional, common notions of fairness. We state the two notions for the setting of a single classifier, extend the definitions to multiple competing classifiers, and then show that versions of Proposition~\ref{prop:worst-case-EO} and~\ref{prop:worst-case-EO-disjoint-random} hold for these notions as well: If the correlations or overlaps in served borrowers between classifiers differ across groups, then classifiers that are fair will not be fair under competition.

The first notion we consider is Equalized Odds (ED), a strengthening of EO, which requires both the classifier's false-negative and false-positive rates to be equal across groups  \citep{hardt2016equality}.  The second we consider is Demographic Parity (DP), which requires the total fraction of approved borrowers to be equal across groups \citep{agarwal2018reductions,dwork2012fairness}. 
 
\begin{itemize}
\item[ED:] The ED level of $c_\ell$ is \[\max_{y\in\{0,1\}}\left\{\big|\E{c_\ell(X,A)|Y=y,A=0} - \E{c_\ell(X,A)|Y=y,A=1}\big|\right\}.\] A classifier is ED if its ED level is 0.
\item [DP:] The DP level of $c_\ell$ is $\big|\E{c_\ell(X,A)|A=0} - \E{c_\ell(X,A)|A=1}\big|$. A classifier is DP if its DP level is 0.
\end{itemize}

We now generalize the above definitions of fairness to a setting with multiple competing lenders,  ED under competition (EDC) and  DP under competition (DPC). 
Fix classifiers $c=(c_1,\ldots,c_{\abs{L}})$, and denote by $R(x,a)=\sum_{\ell\in L} c_\ell(x,a)$ the number offers made to a borrower with observables $(x,a)$. Also, denote by $d(x,a)=\pr{R(x,a)\geq 1}$, the probability that at least one lender offers a borrower with observables $(x,a)$ a loan.
\begin{itemize}
\item[EDC:] The EDC level of classifiers $c$ is \[\max_{y\in\{0,1\}}\left\{\big|\E{d(X,A)|Y=y,A=0} - \E{d(X,A)|Y=y,A=1}\big|\right\}.\] Classifiers are EDC if their EDC level is 0.
\item[DPC:] The DPC level of classifiers $c$ is $\big|\E{d(X,A)|A=0} - \E{d(X,A)|A=1}\big|$. Classifiers are DPC if their DPC level is 0.
\end{itemize}

\subsection{Equalized Odds under Competition}\label{apx:EDC}
In this section we show that classifiers that are ED need not be EDC.
Consider the following simple observation, which follows from the fact that the definition of ED is strictly stronger than that of EO:
\begin{observation}\label{obs:ED-EO} Fix classifiers $c=(c_1,\ldots,c_L)$.
\begin{itemize}
\item If $c_\ell$ is ED, then it is also EO. 
\item The EDC level of classifiers $c$ is at least their EOC level.
\end{itemize}
\end{observation}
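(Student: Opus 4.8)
The plan is to observe that both bullets are immediate consequences of the fact that the EO level (resp.\ the EOC level) appears verbatim as one of the two terms inside the maximum that defines the ED level (resp.\ the EDC level). So the whole argument is really just a matter of lining up the definitions.

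For the first bullet, I would recall that the ED level of $c_\ell$ is $\max_{y\in\{0,1\}}\abs{\E{c_\ell(X,A)|Y=y,A=0}-\E{c_\ell(X,A)|Y=y,A=1}}$, and note that the $y=1$ term of this maximum is exactly the EO level of $c_\ell$. Since the maximum of a finite collection of (nonnegative) numbers is at least each member of the collection, the ED level of $c_\ell$ is at least its EO level. In particular, if the ED level is $0$ then the EO level is $0$, i.e.\ $c_\ell$ is EO, which is the claim.

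For the second bullet I would run the identical argument with $c_\ell(X,A)$ replaced by $d(X,A)$: the EOC level of $c$ is precisely the $y=1$ term in the maximum $\max_{y\in\{0,1\}}\abs{\E{d(X,A)|Y=y,A=0}-\E{d(X,A)|Y=y,A=1}}$ that defines the EDC level, so the EDC level is at least the EOC level.

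There is no genuine obstacle here — the observation is definitional, and indeed the excerpt flags it as following "from the fact that the definition of ED is strictly stronger than that of EO." The only thing worth being careful about is stating the definitions of ED/EDC and EO/EOC side by side so that the containment of terms, and hence the inequality, is manifest.
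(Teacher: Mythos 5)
Your proposal is correct and matches the paper's reasoning: the paper treats the observation as immediate from the fact that ED (resp.\ EDC) is defined as a maximum over $y\in\{0,1\}$ whose $y=1$ term is exactly the EO (resp.\ EOC) level, which is precisely your argument. Nothing further is needed.
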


Fix two ED classifiers $c_1$ and $c_2$, and define $\beta_\ell$, $\sigma_\ell$, and $\rho^a$ as in Section~\ref{sec:correlation}. Observation~\ref{obs:ED-EO} implies the following corollary of Proposition~\ref{prop:worst-case-EO}.
\begin{corollary}\label{prop:worst-case-ED}
For two ED classifiers $c_1$ and $c_2$ with false-negative rates $\beta_1$ and $\beta_2$, the level of EDC is at least $\sigma_1\cdot\sigma_2\cdot\abs{\rho^0-\rho^1}$.
In the worst case, the level of EDC is at least $\min\{\beta_1, \beta_2\}-\max\{0, \beta_1+\beta_2-1\}$. 
\end{corollary}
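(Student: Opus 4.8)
The plan is to derive the result directly from Proposition~\ref{prop:worst-case-EO} together with Observation~\ref{obs:ED-EO}. Since every ED classifier is in particular EO, the pair $c_1,c_2$ qualifies as a pair of EO classifiers with false-negative rates $\beta_1,\beta_2$, and the quantities $\sigma_\ell$ and $\rho^a$ are defined exactly as in Section~\ref{sec:correlation}. Proposition~\ref{prop:worst-case-EO} then says that the EOC level of $c_1,c_2$ equals $\sigma_1\sigma_2\abs{\rho^0-\rho^1}$. By the second part of Observation~\ref{obs:ED-EO}, the EDC level of $c_1,c_2$ is at least their EOC level, which yields the first claimed bound.

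For the worst-case statement, I would revisit the construction in the proof of Proposition~\ref{prop:worst-case-EO} that attains EOC level $\min\{\beta_1,\beta_2\}-\max\{0,\beta_1+\beta_2-1\}$. That construction only pins down the joint law of the classifiers' outputs on positive instances ($Y=1$): it sets one of $\rho^0,\rho^1$ to its Fr\'echet--Hoeffding maximum and the other to its minimum, by nesting versus maximally disjointing the two classifiers' sets of misclassified positive instances. The behavior on negative instances ($Y=0$) is left unconstrained by this, so I would additionally specify each classifier to have a false-positive rate that does not depend on the group. With this extra specification both classifiers satisfy not only EO but also ED. Applying Observation~\ref{obs:ED-EO} once more, the EDC level of this ED pair is at least its EOC level, namely $\min\{\beta_1,\beta_2\}-\max\{0,\beta_1+\beta_2-1\}$, which establishes the worst-case bound.

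There is no substantial obstacle here; the only point that needs a moment's care is that the worst-case EOC construction can be upgraded to an ED construction \emph{without} disturbing the positive-instance behavior that drives the EOC value --- the false-positive coordinate can be equalized across groups at no cost. This is also precisely why the corollary is stated as a lower bound rather than an equality: equalizing false-positive rates is what is needed to make the pair ED, but an ED pair could in principle exhibit an even larger group discrepancy in the $Y=0$ term of the EDC expression, so no matching upper bound on the worst-case EDC is asserted.
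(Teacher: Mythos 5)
Your proposal is correct and follows essentially the same route as the paper, which obtains this corollary directly from Observation~\ref{obs:ED-EO} (ED implies EO, and EDC level is at least EOC level) combined with Proposition~\ref{prop:worst-case-EO}. Your additional care in checking that the worst-case EOC construction can be realized by ED classifiers --- by equalizing false-positive rates across groups without disturbing the positive-instance behavior --- is a detail the paper leaves implicit, and it is exactly the right justification for the worst-case lower bound.
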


Similarly, fix $S_\ell$, $\gamma_\ell^a$, and $\gamma^a$ as in Section~\ref{sec:disjoint}. 
Observation~\ref{obs:ED-EO} implies the following corollary of Proposition~\ref{prop:worst-case-EO-disjoint-random}.
\begin{corollary}\label{prop:worst-case-EO-disjoint-random-ED}
For two uncorrelated ED classifiers $c_1$ and $c_2$ with false-negative rates $\beta_1$ and $\beta_2$, the level of EDC is at least 
\[\abs{(\gamma_2^0-\gamma_2^1)\beta_1+(\gamma_1^0-\gamma_1^1)\beta_2+(\gamma^1-\gamma^0)\beta_1\beta_2}.\]
In the worst case, the level of EDC is at least $\max\{\beta_1, \beta_2\} - \beta_1\beta_2$.
\end{corollary}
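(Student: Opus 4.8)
The plan is to obtain this as a direct transfer from Proposition~\ref{prop:worst-case-EO-disjoint-random} via Observation~\ref{obs:ED-EO}. Since every ED classifier is in particular EO, two uncorrelated ED classifiers $c_1,c_2$ with false-negative rates $\beta_1,\beta_2$ are also two uncorrelated EO classifiers with the same false-negative rates, serving the same subsets $S_1,S_2$ and inducing the same overlap parameters $\gamma_\ell^a,\gamma^a$. Proposition~\ref{prop:worst-case-EO-disjoint-random} therefore pins down their EOC level exactly as $\abs{(\gamma_2^0-\gamma_2^1)\beta_1+(\gamma_1^0-\gamma_1^1)\beta_2+(\gamma^1-\gamma^0)\beta_1\beta_2}$. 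By the second part of Observation~\ref{obs:ED-EO}, the EDC level of $c$ is at least its EOC level, which yields the first claimed bound.

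For the worst-case bound, I would start from the extremal configuration used in the proof of Proposition~\ref{prop:worst-case-EO-disjoint-random}: classifier $\arg\max_i\beta_i$ serves all borrowers with $A=0$ and the other serves none of them, while both serve all borrowers with $A=1$, so that $\gamma^0=0$, $\gamma^1=1$, and the EOC level equals $\max\{\beta_1,\beta_2\}-\beta_1\beta_2$. It remains only to check that this configuration can be instantiated with classifiers that are ED, not merely EO. This requires, for each $\ell$, that the false-positive rate be the same on $A=0$ and $A=1$, which can always be arranged by choosing the conditional distribution of features given $Y=0$ (within $S_\ell$) appropriately. Such a choice does not affect the false-negative rates $\beta_\ell$, the overlap parameters $\gamma_\ell^a,\gamma^a$, or condition (ii) of Definition~\ref{def:uncorrelated-classifiers}, all of which concern only the $Y=1$ instances. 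Applying Observation~\ref{obs:ED-EO} once more to this instance gives an EDC level of at least $\max\{\beta_1,\beta_2\}-\beta_1\beta_2$, as claimed.

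I do not anticipate a genuine obstacle here: the corollary is essentially a ``weakening'' statement that rides entirely on the fact that EDC dominates EOC (Observation~\ref{obs:ED-EO}) and that ED imposes constraints only on the $Y=0$ marginals, which are unconstrained in the model and disjoint from the quantities that determine the EOC level. The one item meriting explicit mention in the write-up is the verification in the previous paragraph that the worst-case EOC construction extends to an ED configuration, so that the worst-case lower bound is witnessed by an actual pair of ED classifiers rather than merely EO ones.
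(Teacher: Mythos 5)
Your proposal is correct and takes essentially the same route as the paper: the corollary is obtained by combining Observation~\ref{obs:ED-EO} (ED implies EO, and the EDC level dominates the EOC level) with the exact EOC expression and worst case from Proposition~\ref{prop:worst-case-EO-disjoint-random}. Your extra verification that the worst-case overlap configuration can be realized by genuinely ED classifiers---by adjusting only the $Y=0$ conditionals, which do not enter the EOC formula or Definition~\ref{def:uncorrelated-classifiers}---is a detail the paper leaves implicit but does not alter the argument.
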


\subsection{Demographic Parity under Competition}\label{apx:DPC}
In this section we show that classifiers that are DP need not be DPC.
\subsubsection{Correlations between classifiers}
Fix two DP classifiers $c_1$ and $c_2$, and denote their approval probabilities by $\eta_\ell=\pr{c_\ell(X,A)=1}$. We define two Bernoulli random variables that capture the probability of approval on the two groups $a\in A$: For each $a\in A$ and each $\ell\in \{1,2\}$, let the Bernoulli random variable
\[C_\ell^a \equiv c_\ell\left(X,A\right)|_{\left(A=a\right)}.\]
Each $C_\ell^a$ is the output of classifier $c_\ell$ on random instances with $A=a$, and so $\E{C_\ell^a}=\eta_\ell$. Denote by $\sigma_\ell=\sqrt{\eta_\ell(1-\eta_\ell)}$ the standard deviation of $C_\ell^0$, and note that, because $c_\ell$ is DP, $\sigma_\ell$ is also the standard deviation of $C_\ell^1$. Finally, for each $a$ let
$\rho^a$ denote the Pearson correlation coefficient between $C_1^a$ and $C_2^a$. The Pearson correlation captures the extent to which the approvals of the classifiers correlate with one another.

We now quantify the DPC level of two classifiers using these correlation coefficients, and determine the worst case.
\begin{proposition}\label{prop:worst-case-DP}
For two DP classifiers $c_1$ and $c_2$ with approval probabilities $\eta_1$ and $\eta_2$, the level of DPC is $\sigma_1\cdot\sigma_2\cdot\abs{\rho^0-\rho^1}$.
In the worst case, the level of DPC is $\min\{1-\eta_1, 1-\eta_2\}-\max\{0, 1-\eta_1-\eta_2\}$. 
\end{proposition}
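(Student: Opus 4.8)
The plan is to mimic the proof of Proposition~\ref{prop:worst-case-EO} almost verbatim, with the random variables $C_\ell^a$ (approvals on group $a$) playing the role that $B_\ell^a$ (true positives on group $a$) played there, and the approval probability $\eta_\ell$ playing the role of the true-positive rate $1-\beta_\ell$. First I would write out the Pearson correlation $\rho^a$ between $C_1^a$ and $C_2^a$ as
\[\rho^a=\frac{\pr{C_1^a=C_2^a=1}-\eta_1\eta_2}{\sigma_1\sigma_2},\]
so that $\pr{C_1^a=C_2^a=1}=\rho^a\sigma_1\sigma_2+\eta_1\eta_2$, and then by inclusion–exclusion
\[\pr{C_1^a=C_2^a=0}=1-\eta_1-\eta_2+\rho^a\sigma_1\sigma_2+\eta_1\eta_2.\]
Since $d(X,A)=1-\pr{c_1(X,A)=c_2(X,A)=0}$ conditional on $A=a$, we get $\E{d(X,A)\mid A=a}=1-\pr{C_1^a=C_2^a=0}$, and taking the absolute difference across $a=0,1$ the terms $1-\eta_1-\eta_2+\eta_1\eta_2$ cancel, leaving $\sigma_1\sigma_2\,\abs{\rho^0-\rho^1}$. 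That gives the first claim.

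For the worst case I would again maximize $\abs{\rho^0-\rho^1}$ by pushing one correlation as high as possible and the other as low as possible. Writing $\rho^a=(s^a-\eta_1\eta_2)/(\sigma_1\sigma_2)$ with $s^a=\pr{C_1^a=C_2^a=1}$, the Fr\'echet–Hoeffding bounds give $\max\{0,\eta_1+\eta_2-1\}\le s^a\le\min\{\eta_1,\eta_2\}$. But the EOC/DPC expression depends on $\rho^a$ only through $\pr{C_1^a=C_2^a=0}$, which by the computation above equals $s^a+(1-\eta_1-\eta_2+\eta_1\eta_2)$ shifted — more directly, I should instead track $t^a=\pr{C_1^a=C_2^a=0}$, which satisfies the Fr\'echet–Hoeffding bounds on the complementary events: $\max\{0,(1-\eta_1)+(1-\eta_2)-1\}\le t^a\le\min\{1-\eta_1,1-\eta_2\}$, i.e. $\max\{0,1-\eta_1-\eta_2\}\le t^a\le\min\{1-\eta_1,1-\eta_2\}$. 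The level of DPC is $\abs{t^0-t^1}$, whose maximum is $\min\{1-\eta_1,1-\eta_2\}-\max\{0,1-\eta_1-\eta_2\}$, as claimed. Finally I would note achievability exactly as in Proposition~\ref{prop:worst-case-EO}: the upper bound on $t^a$ is attained when the approval sets are nested (one group's rejections contained in the other's), and the lower bound when the rejection sets are maximally disjoint, so both extremes are realizable by appropriate joint distributions.

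I do not anticipate a genuine obstacle here — the statement is a near-perfect analogue of Proposition~\ref{prop:worst-case-EO} obtained by conditioning on $A=a$ alone rather than on $A=a,Y=1$, and replacing ``false negative'' with ``non-approval.'' The only point requiring a moment's care is bookkeeping the Fr\'echet–Hoeffding bounds for the \emph{joint non-approval} event $\{C_1^a=C_2^a=0\}$ (equivalently, for the pair of complementary Bernoulli variables $1-C_1^a,1-C_2^a$ with means $1-\eta_1,1-\eta_2$), so that the worst case comes out in terms of $1-\eta_\ell$ rather than $\eta_\ell$; this is exactly why the answer reads $\min\{1-\eta_1,1-\eta_2\}-\max\{0,1-\eta_1-\eta_2\}$ rather than $\min\{\eta_1,\eta_2\}-\max\{0,\eta_1+\eta_2-1\}$.
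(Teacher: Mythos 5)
Your proposal is correct and matches the paper's intended argument exactly: the paper proves Proposition~\ref{prop:worst-case-DP} by repeating the proof of Proposition~\ref{prop:worst-case-EO} with $C_\ell^a$ in place of $B_\ell^a$, $1-\eta_\ell$ in place of $\beta_\ell$, and no conditioning on $Y=1$, which is precisely what you do, including the Fr\'echet--Hoeffding bounds on the joint rejection probability and the nested/disjoint constructions for achievability. (Your worry about stating the bound via $t^a$ rather than $s^a$ is immaterial, since the two intervals have the same length, but your final expression is the one in the statement.)
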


The proof of Proposition~\ref{prop:worst-case-DP} is nearly identical to that of Proposition~\ref{prop:worst-case-EO}, with $C_\ell^a$ replacing $B_\ell^a$, $1-\eta_\ell$ replacing $\beta_\ell$, and without conditioning any of the probabilities and expectations on $Y=1$.

Proposition~\ref{prop:worst-case-DP} yields the following simple corollary.
\begin{corollary}\label{cor:worst-case-DP-symmetric}
Suppose $\eta=\eta_1=\eta_2$. If $\eta\geq 1/2$, then the worst-case level of DPC is $1-\eta$. If $\eta\leq 1/2$, then the worst-case level of DPC is $\eta$.
\end{corollary}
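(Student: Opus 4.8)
The plan is to specialize Proposition~\ref{prop:worst-case-DP} to the symmetric case $\eta = \eta_1 = \eta_2$ and simplify the worst-case expression $\min\{1-\eta_1, 1-\eta_2\} - \max\{0, 1-\eta_1-\eta_2\}$. First I would substitute $\eta_1 = \eta_2 = \eta$: the minimum term collapses to $1-\eta$, and the maximum term becomes $\max\{0, 1-2\eta\}$. So the worst-case DPC level equals $(1-\eta) - \max\{0, 1-2\eta\}$.

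Next I would split into the two regimes. If $\eta \geq 1/2$, then $1 - 2\eta \leq 0$, so $\max\{0, 1-2\eta\} = 0$, and the expression is simply $1-\eta$, as claimed. If $\eta \leq 1/2$, then $1 - 2\eta \geq 0$, so $\max\{0, 1-2\eta\} = 1-2\eta$, and the expression becomes $(1-\eta) - (1-2\eta) = \eta$, again as claimed. (At $\eta = 1/2$ both cases give $1/2$, so the two formulas agree on the overlap.)

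There is essentially no obstacle here — the corollary is a one-line algebraic consequence of Proposition~\ref{prop:worst-case-DP}, mirroring exactly how Corollary~\ref{cor:worst-case-EO-symmetric} follows from Proposition~\ref{prop:worst-case-EO} (with the roles of $\eta$ and $1-\eta$ effectively swapped, since in the DP setting the analogue of $\beta_\ell$ is $1-\eta_\ell$). The only thing worth stating carefully is that the worst case in Proposition~\ref{prop:worst-case-DP} is actually achievable, but that achievability is already part of the proposition's statement (inherited from the Fréchet–Hoeffding argument in the proof of Proposition~\ref{prop:worst-case-EO}), so nothing new needs to be verified. I would simply note that the construction achieving the bound is the one where the two classifiers' approval sets are nested on one group (giving maximal correlation) and maximally disjoint on the other (giving minimal correlation), exactly as in the proof of Proposition~\ref{prop:worst-case-EO}.
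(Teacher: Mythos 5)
Your proposal is correct and matches the paper's treatment: the corollary is presented there as an immediate consequence of Proposition~\ref{prop:worst-case-DP}, obtained exactly by the substitution $\eta_1=\eta_2=\eta$ and the case split on the sign of $1-2\eta$ that you carry out. Nothing further is needed, since achievability is already contained in the proposition.
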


Recall that, under EOC,  the more accurate EO classifiers get, the lower the worst-case level of EOC. With DPC this also holds. To see this, observe that as DP classifiers get more accurate---specifically, as their false-positive and false-negative rates approach 0---the correlations $\rho^0$ and $\rho^1$ between the classifiers also approach 0. By Proposition~\ref{prop:worst-case-DP}, this implies that the level of DPC approaches 0.

\subsubsection{Different sets of borrowers}
Fix two DP classifiers $c_1$ and $c_2$ with approval probabilities $\eta_1$ and $\eta_2$, and let $S_\ell$, $\gamma_\ell^a$, and $\gamma^a$ be as in Section~\ref{sec:disjoint}. 
Consider the following variant of Definition~\ref{def:uncorrelated-classifiers}, which essentially removes the conditioning on $Y=1$:

\begin{definition}\label{def:uncorrelated-classifiers-DP}
DP classifiers $c_1$ and $c_2$ with approval probabilities $\eta_1$ and $\eta_2$ are {\em uncorrelated} if (i) for each $\ell$, 
\begin{align*}
&\pr{c_\ell(X,A)=1|(X,A,Y)\in S_1\cap S_2} \\
&~~~= \pr{c_\ell(X,A)=1|(X,A,Y)\in S_\ell\setminus S_{3-\ell}},
\end{align*}
and (ii) for every $(x,y,a)\in S_1\cap S_2$ it holds that $\pr{c_1(x,a)=c_2(x,a)=1}=\eta_1\cdot\eta_2$.
\end{definition}

\begin{proposition}\label{prop:worst-case-DP-disjoint-random}
For two uncorrelated DP classifiers $c_1$ and $c_2$ with approval probabilities $\eta_1$ and $\eta_2$, the level of DPC is 
\[\abs{(\gamma_2^0-\gamma_2^1)(1-\eta_1)+(\gamma_1^0-\gamma_1^1)(1-\eta_2)+(\gamma^1-\gamma^0)(1-\eta_1)(1-\eta_2)}.\]
In the worst case, the level of EOC is $\max\{1-\eta_1, 1-\eta_2\} - (1-\eta_1)(1-\eta_2)$.
\end{proposition}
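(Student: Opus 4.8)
The plan is to mirror the proof of Proposition~\ref{prop:worst-case-EO-disjoint-random} essentially verbatim, performing the substitution flagged after Proposition~\ref{prop:worst-case-DP}: replace the false-negative rate $\beta_\ell$ by the non-approval probability $1-\eta_\ell$, and drop every conditioning on $Y=1$. Throughout I would identify ``borrower not served by classifier $\ell$'' with ``$c_\ell=0$'', so that $d(x,a)=\pr{R(x,a)\ge 1}$ records only whether at least one serving classifier approves.

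First I would fix $a\in A$ and write $\E{d(X,A)|A=a}=1-\pr{c_1(X,A)=c_2(X,A)=0|A=a}$. Then I would split the event $\{c_1=c_2=0\}$ according to which of the three regions $S_1\setminus S_2$, $S_2\setminus S_1$, $S_1\cap S_2$ the borrower lies in; within group $a$ these have masses $\gamma_1^a-\gamma^a$, $\gamma_2^a-\gamma^a$, and $\gamma^a$, and together they exhaust the group-$a$ mass since $\gamma_1^a+\gamma_2^a-\gamma^a=1$. On $S_1\setminus S_2$ the event reduces to $\{c_1=0\}$, which has probability $1-\eta_1$ by DP together with part~(i) of Definition~\ref{def:uncorrelated-classifiers-DP}; symmetrically the probability is $1-\eta_2$ on $S_2\setminus S_1$; and on $S_1\cap S_2$ part~(ii) of Definition~\ref{def:uncorrelated-classifiers-DP} gives probability $(1-\eta_1)(1-\eta_2)$. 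Hence $\pr{c_1=c_2=0|A=a}=(\gamma_1^a-\gamma^a)(1-\eta_1)+(\gamma_2^a-\gamma^a)(1-\eta_2)+\gamma^a(1-\eta_1)(1-\eta_2)$, which, using $\gamma_1^a-\gamma^a=1-\gamma_2^a$ and $\gamma_2^a-\gamma^a=1-\gamma_1^a$, equals $(1-\gamma_2^a)(1-\eta_1)+(1-\gamma_1^a)(1-\eta_2)+\gamma^a(1-\eta_1)(1-\eta_2)$. Subtracting the $a=0$ and $a=1$ versions and taking the absolute value produces exactly the claimed expression $\abs{(\gamma_2^0-\gamma_2^1)(1-\eta_1)+(\gamma_1^0-\gamma_1^1)(1-\eta_2)+(\gamma^1-\gamma^0)(1-\eta_1)(1-\eta_2)}$.

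For the worst case I would reuse the construction from the proof of Proposition~\ref{prop:worst-case-EO-disjoint-random}: let the classifier with the larger value of $1-\eta_\ell$ serve all borrowers from group $A=0$ and the other serve none of them, while both serve all borrowers from group $A=1$. Then $\gamma^0=0$ and $\gamma^1=1$, and plugging into the formula yields the worst-case DPC level $\max\{1-\eta_1,1-\eta_2\}-(1-\eta_1)(1-\eta_2)$. Optimality then follows by the same bounding step as in the EO proof: from $\gamma_1^a+\gamma_2^a-\gamma^a=1$ one gets $(\gamma_2^0-\gamma_2^1)+(\gamma_1^0-\gamma_1^1)=\gamma^0-\gamma^1$, so the linear-in-$(1-\eta)$ part is at most $(\gamma^0-\gamma^1)\max\{1-\eta_1,1-\eta_2\}$, and the DPC level is at most $(\gamma^0-\gamma^1)\bigl(\max\{1-\eta_1,1-\eta_2\}-(1-\eta_1)(1-\eta_2)\bigr)\le\max\{1-\eta_1,1-\eta_2\}-(1-\eta_1)(1-\eta_2)$.

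The argument is essentially a routine translation, so there is no deep obstacle; the only points that need care are the bookkeeping conventions. One must be explicit that an unserved borrower contributes $c_\ell=0$ to $d$, and, more substantively, one must verify that DP combined with part~(i) of Definition~\ref{def:uncorrelated-classifiers-DP} really does pin the approval probability of $c_\ell$ to the constant $\eta_\ell$ on each served sub-region within a group --- this is what makes the three region-by-region contributions clean, and it is the exact analogue of how EO plus part~(i) of Definition~\ref{def:uncorrelated-classifiers} is used in the proof of Proposition~\ref{prop:worst-case-EO-disjoint-random}.
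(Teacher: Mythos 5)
Your proposal is correct and is exactly the route the paper takes: the paper's own proof is just the remark that the argument of Proposition~\ref{prop:worst-case-EO-disjoint-random} carries over verbatim with $1-\eta_\ell$ in place of $\beta_\ell$ and all conditioning on $Y=1$ dropped, which is precisely the translation you carry out, including the same region decomposition, the same worst-case construction with $\gamma^0=0$, $\gamma^1=1$, and the same bounding step. Your closing caveats (unserved borrowers count as $c_\ell=0$, and DP plus part~(i) of Definition~\ref{def:uncorrelated-classifiers-DP} fixing the per-region approval probability at $\eta_\ell$) are handled at the same level of detail as in the paper's EO proof, so nothing is missing.
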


The proof of Proposition~\ref{prop:worst-case-DP-disjoint-random} is nearly identical to that of Proposition~\ref{prop:worst-case-EO-disjoint-random}, with $1-\eta_\ell$ replacing $\beta_\ell$ and without conditioning any of the probabilities and expectations on $Y=1$.

Proposition~\ref{prop:worst-case-DP-disjoint-random} yields the following simple corollary.
\begin{corollary}\label{cor:disjoint-DP}
For two uncorrelated DP classifiers $c_1$ and $c_2$ with approval probabilities $\eta_1=\eta_2=\eta$, the level of DPC is $\eta\cdot\left(1-\eta\right)\cdot\abs{\gamma^0-\gamma^1}$.
\end{corollary}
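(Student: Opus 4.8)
The plan is to derive Corollary~\ref{cor:disjoint-DP} directly from Proposition~\ref{prop:worst-case-DP-disjoint-random} by specializing to $\eta_1=\eta_2=\eta$ and simplifying algebraically. Starting from the stated DPC level
\[
\abs{(\gamma_2^0-\gamma_2^1)(1-\eta_1)+(\gamma_1^0-\gamma_1^1)(1-\eta_2)+(\gamma^1-\gamma^0)(1-\eta_1)(1-\eta_2)},
\]
I would substitute $\eta_1=\eta_2=\eta$ and factor out the common $(1-\eta)$, reducing the expression to
\[
(1-\eta)\cdot\abs{(\gamma_2^0-\gamma_2^1)+(\gamma_1^0-\gamma_1^1)+(\gamma^1-\gamma^0)(1-\eta)}.
\]

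The key step is then to invoke the identity $\gamma_1^a+\gamma_2^a-\gamma^a=1$ established at the start of Section~\ref{sec:disjoint}, which holds for each $a\in\{0,1\}$. Summing the contributions from the two exclusive-service terms gives $(\gamma_1^0+\gamma_2^0)-(\gamma_1^1+\gamma_2^1)=(1+\gamma^0)-(1+\gamma^1)=\gamma^0-\gamma^1$. Plugging this in, the bracketed quantity becomes $(\gamma^0-\gamma^1)+(\gamma^1-\gamma^0)(1-\eta)=(\gamma^0-\gamma^1)\bigl(1-(1-\eta)\bigr)=\eta(\gamma^0-\gamma^1)$, so the DPC level equals $(1-\eta)\cdot\abs{\eta(\gamma^0-\gamma^1)}=\eta(1-\eta)\abs{\gamma^0-\gamma^1}$, as claimed.

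There is no real obstacle here — the proof is a one-line substitution followed by the bookkeeping identity relating $\gamma_1^a$, $\gamma_2^a$, and $\gamma^a$; the only thing to be careful about is applying that identity for both $a=0$ and $a=1$ before taking the difference. I would present the two displays above in sequence and conclude.
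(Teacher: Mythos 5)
Your derivation is correct and is exactly the route the paper intends: Corollary~\ref{cor:disjoint-DP} is stated as an immediate specialization of Proposition~\ref{prop:worst-case-DP-disjoint-random}, and your substitution $\eta_1=\eta_2=\eta$ together with the identity $\gamma_1^a+\gamma_2^a-\gamma^a=1$ (applied for both $a=0$ and $a=1$) gives $(1-\eta)\abs{\eta(\gamma^0-\gamma^1)}=\eta(1-\eta)\abs{\gamma^0-\gamma^1}$, matching the paper's (implicit) argument and mirroring the analogous EOC case in Corollary~\ref{cor:disjoint}.
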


\section{Additional experiments}\label{apx:experiments}
In this section we describe and report the results of additional experiments that we ran.

\subsection{Effect size}
In addition to examining the prevalence of the effect of fairness adjustment on the EOC, we also measured its magnitude. For each run in Experiments 1-3, if the EOC level was higher after fairness adjustment than before, we measured the factor by which the EOC level increased. For each experiment we then calculated the average factor by which the EOC level increased (conditional on increasing), as well as the standard error of the mean. Table~\ref{fig:EOC-effect-size} reports the results: The values in the table are the 95\% confidence intervals for the factor by which the EOC level increased after adjustment (conditional on increasing). 

\begin{table}[h]
  \caption{95\%-CI for effect size (ratio $>$ 1 only)}
  \label{fig:EOC-effect-size}
  \centering
  \begin{tabular}{ccccccc}
  \toprule
           & 300 & 1k & 3k & 10k & 30k & 100k \\ 
  \cmidrule(r){2-7}
{Exp.\ 1} &
{[10.8, 30.8]} &
{[7.8, 16.3]} &
{[7.6, 121.0]} &
{[6.0, 19.9]} &
{[7.3, 23.0]} &
{[7.5, 31.0]} \\ 
{Exp.\ 2} &
{[21.1, 111.8]} &
{[8.3, 30.1]} &
{[2.9, 9.4]} &
{[1.8, 2.2]} &
{[1.6, 1.8]} &
{[1.3, 1.4]} \\ 
{Exp.\ 3} &
{[14.3, 27.2]} &
{[10.0, 74.2]} &
{[8.2, 16.7]} &
{[4.8, 32.4]} &
{[2.5, 3.6]} &
{[1.9, 4.3]} \\ 
\bottomrule
\end{tabular}
\end{table}

\subsection{Experiment with different subsets} 
In this experiment we simulated a situation in which $S_1\neq S_2$, as in Proposition~\ref{prop:worst-case-EO-disjoint-random}. In particular, we simulated the situation described in Example~\ref{ex:harmful-fairness-disjoint}, where $S_1=\supp(\D)$ and $S_2=\supp(\D|A=1)$, and where $A=1$ is the set of individuals with a mortgage. Classifier $c_1$ was trained on a random set of examples taken from the entire dataset, whereas classifier $c_2$ was trained on a random set of examples consisting only of individuals with a mortgage. Both classifiers were logistic regressions, but since only classifier $c_1$ served individuals with both values of $A$, only that classifier underwent fairness adjustment. As in the previous experiments, we measured the level of EOC before and after the fairness adjustment.

Our results for experiment are reported in Tables~\ref{fig:EOC-experiment4} and~\ref{fig:EOC-effect-size-experiment4}, and are the following: For training sets of size up 10k, fairness adjustment leads to worse EOC in a significant fraction of runs. For larger training sets, however, the effect disappears. This is in line with the previous experiments, except that the effect diminishes for smaller training sets. Even when the effect is prevalent, however, its size is rather small.

\begin{table}[h]
  \caption{95\%-CI for likelihood EOC level increased following fairness adjustment}
  \label{fig:EOC-experiment4}
  \centering
  \begin{tabular}{ccccccc}
  \toprule
                            & 300                         & 1k                          & 3k                          & 10k                         & 30k                         & 100k                                                \\ 
                         \cmidrule(r){2-7}
{Exp.\ 4} & {[52.2, 60.4]} & {[34.2, 42.4]} & {[17.8, 25.2]} & {[5.4, 10.0]}  & {[0.8, 3.2]}  & {[0.0, 0.0]}    \\ 
\bottomrule
\end{tabular}
\end{table}

\begin{table}[h]
  \caption{95\%-CI for effect size (ratio $>$ 1 only)}
  \label{fig:EOC-effect-size-experiment4}
  \centering
  \begin{tabular}{ccccccc}
  \toprule
           & 300 & 1k & 3k & 10k & 30k & 100k \\ 
  \cmidrule(r){2-7}
{Exp.\ 4} &
{[1.5, 1.8]} &
{[1.2, 1.3]} &
{[1.1, 1.2]} &
{[1.1, 1.1]} &
{[1.0, 1.1]} &
{[N/A]} \\ 
\bottomrule
\end{tabular}
\end{table}

\subsection{Independent samples}
We ran two experiments that were similar to Experiments 1-3, except that the training data was chosen independently for each classifier.
\begin{itemize}
\item[\textbf{Exp.\ 5}] Both classifiers were decision trees, and their training data was sampled independently.
\item[\textbf{Exp.\ 6}] Classifier $c_1$ was a logistic regression and $c_2$  a decision tree, and their training data was sampled independently.
\end{itemize}
The results are reported in Tables~\ref{tab:harm-exp0-1} and~\ref{tab:effect-ci-exp0-1}.

\begin{table}[htbp]
  \caption{95\%-CI for harm likelihood following fairness adjustment}
  \label{tab:harm-exp0-1}
  \centering
  \begin{tabular}{ccccccc}
  \toprule
           & 300 & 1k & 3k & 10k & 30k & 100k \\ 
  \cmidrule(r){2-7}
{Exp.\ 5} &
{[56.2, 65.0]} &
{[57.0, 65.6]} &
{[50.2, 58.8]} &
{[51.0, 59.6]} &
{[41.2, 49.6]} &
{[24.0, 31.4]} \\ 
{Exp.\ 6} &
{[71.8, 79.2]} &
{[67.0, 74.4]} &
{[55.0, 63.6]} &
{[49.6, 58.4]} &
{[45.0, 54.4]} &
{[28.0, 36.0]} \\ 
\bottomrule
\end{tabular}
\end{table}

\begin{table}[h!]
  \caption{95\%-CI for effect size (ratio $>$ 1 only)}
  \label{tab:effect-ci-exp0-1}
  \centering
  \begin{tabular}{ccccccc}
  \toprule
           & 300 & 1k & 3k & 10k & 30k & 100k \\ 
  \cmidrule(r){2-7}
{Exp.\ 5} &
{[5.4, 9.4]} &
{[6.9, 14.0]} &
{[6.4, 44.8]} &
{[7.3, 14.1]} &
{[5.7, 18.5]} &
{[3.2, 18.7]} \\ 
{Exp.\ 6} &
{[10.7, 858.3]} &
{[8.3, 24.5]} &
{[5.8, 10.3]} &
{[11.3, 35.8]} &
{[5.6, 11.4]} &
{[5.7, 12.9]} \\ 
\bottomrule
\end{tabular}
\end{table}

\subsection{Different dataset}
We ran Experiments 1-3 on a different dataset from the Lending Club loan data---the Adult Census Income dataset from the UC Irvine Machine Learning Repository \citep{adult}. This dataset contains data on roughly 40,000 individuals. We used simple classifiers to predict whether an individual's income is above \$50,000, given a set of features that included age, education, work hours per week, occupation, and relationship status. The protected attribute $A$ was gender.

\begin{itemize}
\item[\textbf{Exp.\ 1'}] Classifier $c_1$ was a logistic regression and $c_2$  a decision tree, but their training data was identical.
\item[\textbf{Exp.\ 2'}] Both classifiers were logistic regressions, but their training data was disjoint: one was trained on random examples of White individuals, and one on non-White individuals.
\item[\textbf{Exp.\ 3'}] Classifier $c_1$ was a logistic regression and $c_2$  a decision tree, and their training data was disjoint (as in Exp.\ 2).
\end{itemize}

The results are reported in Tables~\ref{tab:harm-exp1-4} and~\ref{tab:effect-ci-exp1-4}.

\begin{table}[h!]
  \caption{95\%-CI for harm likelihood following fairness adjustment}
  \label{tab:harm-exp1-4}
  \centering
  \begin{tabular}{ccccc}
  \toprule
           & 300 & 1k & 3k & 10k \\ 
  \cmidrule(r){2-5}
{Exp.\ 1'} &
{[43.8, 52.6]} &
{[29.6, 37.4]} &
{[13.4, 20.2]} &
{[2.0, 5.2]} \\ 
{Exp.\ 2'} &
{[52.8, 63.1]} &
{[45.4, 54.4]} &
{[13.6, 20.2]} &
{[0.0, 1.2]} \\ 
{Exp.\ 3'} &
{[43.2, 52.8]} &
{[39.4, 48.1]} &
{[34.4, 43.0]} &
{[18.4, 25.4]} \\ 
\bottomrule
\end{tabular}
\end{table}

\begin{table}[h!]
  \caption{95\%-CI for effect size (ratio $>$ 1 only)}
  \label{tab:effect-ci-exp1-4}
  \centering
  \begin{tabular}{ccccc}
  \toprule
           & 300 & 1k & 3k & 10k \\ 
  \cmidrule(r){2-5}
{Exp.\ 1'} &
{[3.9, 11.8]} &
{[4.4, 8.8]} &
{[2.2, 14.2]} &
{[1.1, 1.3]} \\ 
{Exp.\ 2'} &
{[4.7, 7.9]} &
{[4.7, 12.1]} &
{[1.8, 2.5]} &
{[1.0, 2.0]} \\ 
{Exp.\ 3'} &
{[3.1, 26.2]} &
{[4.4, 8.2]} &
{[6.1, 51.7]} &
{[2.3, 3.7]} \\ 
\bottomrule
\end{tabular}
\end{table}

\subsection{More than two classifiers}
We ran several experiments that included three, rather than two, classifiers. As before, we measured the level of EOC before and after all three underwent a fairness adjustment. In particular, we ran the following three experiments on the Lending Club data:
\begin{itemize}
\item[\textbf{Exp.\ 7}] Classifier $c_1$ was a logistic regression, $c_2$  a decision tree, and $c_3$ a random forest, and their training data was identical.
\item[\textbf{Exp.\ 8}] All classifiers were logistic regressions, and their training data was independently sampled.
\item[\textbf{Exp.\ 9}] Classifier $c_1$ was a logistic regression, $c_2$  a decision tree, and $c_3$ a random forest, and their training data was independently sampled.
\end{itemize}

Tables~\ref{tab:harm-exp5-7} and~\ref{tab:effect-ci-exp5-7} report the results.

\begin{table}[htbp]
  \caption{95\%-CI for harm likelihood following fairness adjustment}
  \label{tab:harm-exp5-7}
  \centering
  \begin{tabular}{ccccccc}
  \toprule
           & 300 & 1k & 3k & 10k & 30k & 100k \\ 
  \cmidrule(r){2-7}
{Exp.\ 7} &
{[74.2, 81.2]} &
{[71.2, 78.6]} &
{[60.0, 68.2]} &
{[49.6, 58.2]} &
{[44.6, 53.4]} &
{[28.4, 36.4]} \\ 
{Exp.\ 8} &
{[80.4, 87.0]} &
{[79.0, 85.8]} &
{[77.4, 84.2]} &
{[65.4, 73.0]} &
{[54.6, 63.4]} &
{[30.8, 39.0]} \\ 
{Exp.\ 9} &
{[74.2, 81.4]} &
{[73.0, 80.8]} &
{[65.0, 73.2]} &
{[63.6, 71.6]} &
{[52.0, 60.8]} &
{[34.2, 43.2]} \\ 
\bottomrule
\end{tabular}
\end{table}

\begin{table}[h!]
  \caption{95\%-CI for effect size (ratio $>$ 1 only)}
  \label{tab:effect-ci-exp5-7}
  \centering
  \begin{tabular}{ccccccc}
  \toprule
           & 300 & 1k & 3k & 10k & 30k & 100k \\ 
  \cmidrule(r){2-7}
{Exp.\ 7} &
{[11.7, 57.7]} &
{[8.8, 19.1]} &
{[7.5, 19.7]} &
{[6.2, 22.8]} &
{[5.5, 11.6]} &
{[5.4, 22.7]} \\ 
{Exp.\ 8} &
{[25.2, 133.7]} &
{[12.4, 32.5]} &
{[6.0, 10.9]} &
{[3.2, 3.7]} &
{[2.2, 2.5]} &
{[1.5, 1.7]} \\ 
{Exp.\ 9} &
{[17.1, 106.7]} &
{[13.6, 26.3]} &
{[9.6, 28.9]} &
{[7.6, 16.1]} &
{[7.2, 15.2]} &
{[5.9, 14.3]} \\ 
\bottomrule
\end{tabular}
\end{table}

\bigskip

In addition, we ran Experiments 7', 8', and 9', which were the same as 7, 8, and 9 except that they used the Adult Census Income dataset. The results are reported in Tables~\ref{tab:harm-exp5-7-b} and~\ref{tab:effect-ci-exp5-7-b}.

\begin{table}[h!]
  \caption{95\%-CI for harm likelihood following fairness adjustment}
  \label{tab:harm-exp5-7-b}
  \centering
  \begin{tabular}{cccccc}
  \toprule
           & 300 & 1k & 3k & 10k & 30k \\ 
  \cmidrule(r){2-6}
{Exp.\ 7'} &
{[41.2, 50.2]} &
{[31.2, 39.0]} &
{[9.8, 15.6]} &
{[0.4, 2.2]} &
{[0.0, 0.0]} \\ 
{Exp.\ 8'} &
{[49.2, 57.8]} &
{[30.2, 38.0]} &
{[7.8, 12.8]} &
{[0.0, 0.6]} &
{[0.0, 0.0]} \\ 
{Exp.\ 9'} &
{[47.8, 56.4]} &
{[34.6, 43.2]} &
{[16.6, 23.6]} &
{[5.2, 10.0]} &
{[0.0, 1.0]} \\ 
\bottomrule
\end{tabular}
\end{table}

\begin{table}[h!]
  \caption{95\%-CI for effect size (ratio $>$ 1 only)}
  \label{tab:effect-ci-exp5-7-b}
  \centering
  \begin{tabular}{cccccc}
  \toprule
           & 300 & 1k & 3k & 10k & 30k \\ 
  \cmidrule(r){2-6}
{Exp.\ 7'} &
{[2.7, 3.8]} &
{[2.9, 8.6]} &
{[1.3, 2.0]} &
{[1.1, 1.7]} &
{[N/A]} \\ 
{Exp.\ 8'} &
{[5.7, 53.2]} &
{[3.5, 7.7]} &
{[1.5, 2.5]} &
{[1.0, 1.0]} &
{[N/A]} \\ 
{Exp.\ 9'} &
{[3.8, 15.2]} &
{[3.4, 12.7]} &
{[1.9, 3.4]} &
{[1.1, 1.2]} &
{[1.0, 1.1]} \\ 
\bottomrule
\end{tabular}
\end{table}

\end{document}